\def\paragraph{\@startsection{paragraph}{4}%
  \z@\z@{-\fontdimen2\font}%
  {\normalfont\bfseries}}
\def\1{\bm{1}}
\DeclareMathAlphabet{\mathsfit}{\encodingdefault}{\sfdefault}{m}{sl}
\SetMathAlphabet{\mathsfit}{bold}{\encodingdefault}{\sfdefault}{bx}{n}
\newcommand{\sigmoid}{\sigma}
\newtheorem{theorem}{Theorem}
\newtheorem{definition}{Definition}
\newtheorem{remark}{Remark}
\newtheorem{corollary}{Corollary}
\pgfplotsset{compat=1.10}
       \tikzset{
        hatch distance/.store in=\hatchdistance,
        hatch distance=10pt,
        hatch thickness/.store in=\hatchthickness,
        hatch thickness=2pt,
        hatch color/.store in=\hatchcolor,      
        hatch color=black,                      
    }
\pgfqpoint{\hatchdistance}{\hatchdistance}}
\providecommand{\nor}[1]{\ensuremath{\left\lVert {#1} \right\rVert}}
\newtcolorbox{assbox}{colback=black!5!white,colframe=black!75!black}
  \newtcolorbox{thmbox}{colback=blue!5!white,colframe=black!75!black}
\newtheorem{assumption}{Assumption}
\title[Distributional Preference Alignment of LLMs via Optimal Transport]{Distributional Preference Alignment of LLMs \\via Optimal Transport}
\author[]{Igor Melnyk $^{*}$}
\author[]{Youssef Mroueh$^{*}$}
\author[]{Brian Belgodere$^{*}$}
\author[]{Mattia Rigotti}
\author[]{Apoorva Nitsure}
\author[]{ Mikhail Yurochkin}
\author[]{Kristjan Greenewald}
\author[]{Jiri Navratil}
\author[]{Jarret Ross}
\thanks{* Core Contribution}
\address[I. Melnyk,Y. Mroueh, B. Belgodere, M. Rigotti, A. Nitsure, M. Yurochkin, K. Greenewald, J. Navratil, J. Ross]{IBM Research \& MIT-IBM Watson Lab}
\begin{document}
\begin{abstract}
 Current LLM alignment techniques use pairwise human preferences at a sample level, and as such, they do not imply an alignment on the distributional level. We propose in this paper Alignment via  Optimal Transport ($\mathsf{AOT}$), a novel method for distributional preference alignment of LLMs. $\mathsf{AOT}$ aligns LLMs on unpaired preference data by making the reward distribution of the positive samples stochastically dominant in the first order on the distribution of negative samples. We introduce a convex relaxation of this first-order stochastic dominance and cast it as an optimal transport problem with a smooth and convex cost. Thanks to the one-dimensional nature of the resulting optimal transport problem and the convexity of the cost, it has a closed-form solution via sorting on empirical measures.  We fine-tune LLMs with this $\mathsf{AOT}$ objective, which enables alignment by penalizing the violation of the stochastic dominance of the reward distribution of the positive samples on the reward distribution of the negative samples. We analyze the sample complexity of $\mathsf{AOT}$ by considering the dual of the OT problem and show that it converges at the parametric rate. Empirically, we show on a diverse set of alignment datasets and LLMs that $\mathsf{AOT}$ leads to state-of-the-art models in the 7B family of models when evaluated with Open LLM Benchmarks and AlpacaEval.  
\end{abstract}

\maketitle

\section{Introduction}
Aligning Large Language Models (LLMs) with human preferences is a crucial step  in making these models safe and having them follow instructions faithfully.  By ensuring that LLMs adhere to human preferences, values, ethics, and desired behaviors we can reduce the risk of generating harmful, biased, or inappropriate content.

Reinforcement Learning from Human Feedback, \emph{RLHF} \citep{NIPS2017_d5e2c0ad,stiennon2020learning,ouyang2022training,bai2022training}, achieves this by learning a reward model on human preference data, followed by fine-tuning the LLM to maximize the reward score while staying close to the initial reference policy to retain utility from the pre-trained model. Recently, new paradigms departed from RLHF towards direct preference optimization methods such as DPO \citep{rafailov2024direct}, SLIC  \citep{zhao2023calibrating}, and Identity Policy optimization  \citep{azar2024general}. In these approaches, the reward is expressed in terms of the log-likelihood ratio between the LLM policy and the reference model. The training is done on paired preference data, i.e.\ as triplets of prompts, chosen and rejected sentences, where for each prompt a chosen and a rejected sample are available. The training objective is to maximize the margin between the log-likelihood ratio evaluated on the chosen sentence versus the log-likelihood ratio on rejected sentences. When paired preference data is not available, and the preference data instead takes the form of distinct marginals of chosen prompt/response pairs and rejected prompt/response pairs, we refer to this setup as the unpaired data setting.
\cite{ethayarajh2024kto} used Kahneman \& Tversky’s prospect theory in the unpaired setting and proposed the KTO method that maximizes the margin between the chosen reward and the average reward of rejected sentences and pushes the reward of a rejected sentence below the average reward of chosen sentences.

In this paper, we introduce a new \emph{distributional} optimization method for fine-tuning LLMs from human preference data. Previous work in the paired setting focused on improving the reward of chosen sentences over rejected sentences on a per-sample basis. This procedure does not lead to a preference on a distributional level of the chosen marginal on the rejected marginal. In probabilistic terms, we would like to induce stochastic dominance of the reward of chosen sentences on the reward of rejected ones. First order Stochastic Dominance (FSD, see e.g.\ \citealp{ogryczak2002dual}) of a random variable $X$ on a random variable $Y$, means that all quantiles values of $X$ are larger than those of $Y$. 
\begin{figure}[t!]
    \begin{subfigure}{0.45\textwidth}
        \centering
        \includegraphics[width=\textwidth]{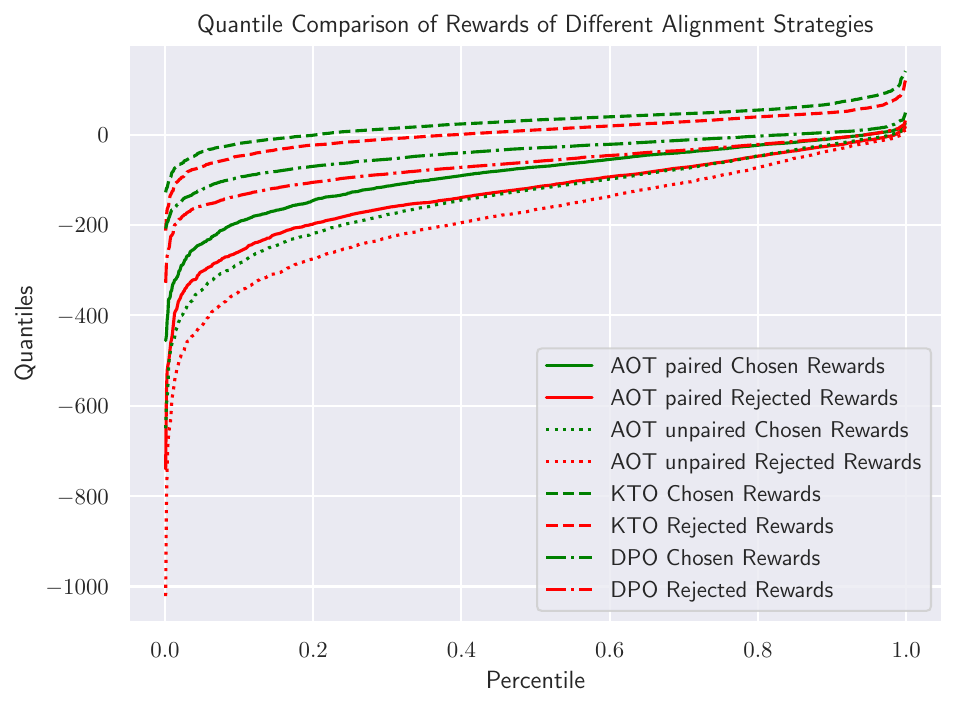}
        \caption{Stochastic Dominance of Reward of  Chosen on Rejected: $\mathsf{AOT}$ achieves a larger margin between the quantile plots of chosen and rejected rewards. }
        \label{fig:subfigure1}
    \end{subfigure}
    \hspace{0.5cm} 
    \begin{subfigure}{0.45\textwidth}
        \centering
        \includegraphics[width=\textwidth]{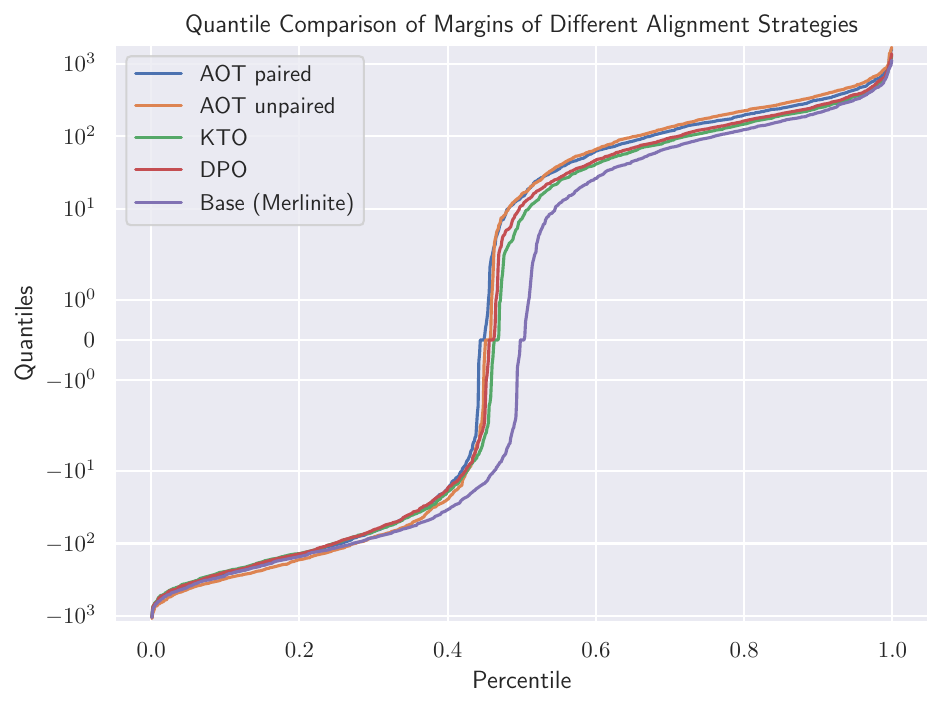}
        \caption{Stochastic Dominance of  $\mathsf{AOT}$'s optimized policy margin (between Chosen on Rejected) on the margin of the reference policy. }
        \label{fig:subfigure2}
    \end{subfigure}
    \caption{$\mathsf{AOT}$ in the paired \& unpaired settings enables first-order stochastic dominance of the chosen reward distribution on the rejected distribution (a). The margin between the quantiles of chosen and rejected rewards is larger than alternative strategies. In (b), we see that  $\mathsf{AOT}$'s  policy chosen to rejected  log-likelihood ratio dominates that ratio for the base model and alternative strategies.}
    \label{fig:overall}
\vskip -0.13in
\end{figure}
Our main contribution is introducing $\mathsf{AOT}$, \emph{Alignment via Optimal Transport}, a new method that enables distributional alignment.  We do so by devising a new $\mathsf{AOT}$ objective function that induces in the unpaired setting FSD dominance of \emph{chosen reward's} distribution over \emph{rejected reward's} distribution. We call this unpaired variant $\mathsf{uAOT}$. In the paired setting, we introduce $\mathsf{pAOT}$ that encourages a dominance of chosen to rejected $\log$ likelihood ratio of the optimized policy on that ratio for the reference base policy. We show that the $\mathsf{AOT}$ cost can be cast as a one-dimensional optimal transport problem that can be solved via sorting and efficiently optimized for the LLM. $\mathsf{AOT}$ enjoys also nice statistical proprieties and achieves the parametric rate since its objective can be seen as a smooth one-dimensional optimal transport problem. 
$\mathsf{AOT}$  achieves state-of-the-art results on the Alpaca leaderboard \citep{dubois2024length} using the \texttt{Merlinite 7B} model \citep{sudalairaj2024lab} as a base and scores among the highest 7B model at the time of writing this paper, without using any iterative training process as in \citep{yuan2024selfrewarding}.

To introduce the important concepts of our work pictorially, we show in \cref{fig:subfigure1} the quantile plots of the rewards of $\mathsf{AOT}$ and alternative alignment strategies (DPO, KTO)  for chosen responses (in green)  and rejected responses in (red). The quantile plots are estimated on a paired test set. We see that $\mathsf{AOT}$ leads to chosen rewards that have larger margins than those of rejected rewards across all percentiles.  More importantly, this margin is larger in $\mathsf{AOT}$ models than in policies coming from alternative alignment strategies. We then show in \cref{fig:subfigure2} how the $\mathsf{AOT}$ aligned policy's chosen-to-rejected log-likelihood ratio dominates that same ratio evaluated on the base model's ratio across all percentiles. The distributional alignment induced by $\mathsf{AOT}$ ensures a large margin between all quantiles so that the preference is reflected not only on average but distributionally. We formalize distributional preference in the next section.



\section{Distributional Preference via First Order Stochastic Dominance}
\paragraph{First Order Stochastic Dominance} For a  real random variable $Z$ we denote $F^{(-1)}_{Z}: [0,1]\to \overline{\mathbb{R}}$  the left-continuous  inverse of the Cumulative Distribution Function (CDF) $F_{Z}$:
$$Q_{Z}(p)= F^{(-1)}_{Z}(p)= \inf \{\eta : F_{Z}(\eta) \geq p \} \text{ for } p \in [0,1].$$
Given two random variables $Z_1$ and $Z_2$, we say that $Z_1$ dominates $Z_2$ in the first order if $Z_1$ has larger quantiles than $Z_2$ for all percentiles $p$: 
\begin{equation}
Z_1  \underset{\text{FSD}}{\succcurlyeq} Z_2 \iff Q_{Z_1}(p) \geq Q_{Z_2}(p), \quad\forall p  \in [0,1].
\label{eq:FSD}
\end{equation}

Let $\mathcal{X}$ be the space of prompts $X$ and  $\mathcal{Y}$ be the space of responses $Y$ from an LLM conditioned on a prompt $X\in \mathcal{X}$. The reference LLM is represented as policy $\pi_{\mathrm{ref}}(Y|X)$, i.e., as a conditional probability on $\mathcal{Y}$ given a prompt $X\in \mathcal{X}$.  We note the LLM policy we are optimizing by $\pi_{\theta}$  where $\theta$ is a parameter belonging to a bounded parameter space  $\Theta \subset \mathbb{R}^{d_{\theta}}$. For a measure $\mu \in \mathcal{P}(\mathcal{X}\times \mathcal{Y})$ and a mapping $r: \mathcal{X}\times \mathcal{Y} \to \mathbb{R}$, we note as $r_{\sharp}\mu $ the pushforward map of $\mu$ through $r$.
In particular, for empirical measures $\mu= \frac{1}{n}\sum_{i=1}^n \delta_{(x_i,y_i)}$, we have that $r_{\sharp}\mu = \frac{1}{n} \sum_{i=1}^n \delta_{r(x_i,y_i)}$.

\noindent \textbf{DPO as a Pointwise Preference Approach} In Direct Preference Optimization (DPO, \citealp{rafailov2024direct}), the reward being optimized by the LLM has the following form : 
  \[r_{\theta}(x,y) = \beta \log \frac{\pi_{\theta}(y|x)}{\pi_{\mathrm{ref}}(y|x)}  +\beta \log(Z(x)),  \]
where $Z(x)$ is a normalization constant.
DPO assumes access to a paired preference dataset $(X,Y_+,Y_-) \sim \mu$ where $Y_+$ denotes a positive (chosen) response to which we would like to assign a high reward, and $Y_-$ a negative (rejected) response to which we would like to assign a low reward.
This can be formalized as minimizing the logarithmic sigmoid loss :
  \[ \min _{\theta \in \Theta}  -\mathbb{E}_{(x,y_{+},y_{-}) \sim \mu} \log (\sigmoid( \beta(r_{\theta}(x,y_+) - r_{\theta}(x,y_-))) ),\] 
and since the difference is taken for the same $x$, the normalization $Z(x)$ disappears resulting in:
\[\min_{\theta} -\mathbb{E}_{(x,y_{+},y_{-}) \sim \mu} \log \left(\sigmoid \left( \beta \log\left( \frac{\pi_{\theta}(y_+|x)}{\pi_{\mathrm{ref}}(y_+|x)} \right)  - \beta \log\left( \frac{\pi_{\theta}(y_-|x)}{\pi_{\mathrm{ref}}(y_-|x)} \right) \right) \right ). \]
We can interpret this as a pointwise constraint inducing preference for positive over negative reward outcomes as follows:
\begin{equation}
\log\left( \frac{\pi_{\theta}(y_+|x)}{\pi_{\mathrm{ref}}(y_+|x)} \right)    \geq  \log\left( \frac{\pi_{\theta}(y_-|x)}{\pi_{\mathrm{ref}}(y_-|x)} \right), \quad\forall (x,y_+,y_-)\sim \mu.
\label{eq:posnegPref}
\end{equation}
DPO can then be interpreted as a relaxation of this constraint through the logistic loss, which also suggests other preference optimization algorithms through relaxations using, for example, the hinge loss as proposed in SLIC \citep{zhao2023calibrating}.

\subsection{Distributional Preference via Stochastic Dominance}

Our main insight from looking at the pointwise constraint in \cref{eq:posnegPref} is that we can recast it as a distributional constraint in terms of stochastic dominance of the random variable $Z^{+}_{\theta} = \log( \frac{\pi_{\theta}(Y_+|X)}{\pi_{\mathrm{ref}}(Y_+|X)} )$ of positive outcomes on the random variable $Z^{-}_{\theta}=\log( \frac{\pi_{\theta}(Y_-|X)}{\pi_{\mathrm{ref}}(Y_-|X)} )$ of negative outcomes.
This is especially valuable in the unpaired setting without access to triplets of prompts and positive and negative responses as required by DPO.
This is indeed the same setting considered by KTO \citep{ethayarajh2024kto}.  The following paragraph formalizes this unpaired distributional preference.

\paragraph{\textbf{Distributional Unpaired Preference}} We assume here that we don't have access to triplets of prompts and positive/negative responses  $(x,y_+,y_-)$.
Instead, we assume separate access to $\mu_{+} \in \mathcal{P}(\mathcal{X} \times \mathcal{Y})$, a distribution of positive prompt/response pairs $(X_+,Y_+)$ we would like to be highly rewarded and reinforce in the policy, and $\mu_- \in \mathcal{P}(\mathcal{X} \times \mathcal{Y})$ the distribution of the negative samples $(X_-,Y_-)$ to be associated with low reward.
We define the distributional preference as follows:
 
\begin{definition}[Distributional Preference in the Unpaired Setting]   A policy $\pi$ prefers distributionally $\mu_+$ on $\mu_-$ with respect to a reference policy $\pi_{\mathrm{ref}}$ if: 
$$ \log \frac{\pi_{\theta}(Y_+|X_+) }{\pi_{\mathrm{ref}}(Y_+|X_+)} \underset{ \text{FSD }}{\succcurlyeq}\log  \frac{\pi_{\theta}(Y_-|X_-)}{\pi_{\mathrm{ref}}(Y_-|X_-)} .   $$ 
In other words, noting $r_u\circ \pi_{\theta}(x,y)  =\log \frac{\pi_{\theta}(y|x) }{\pi_{\mathrm{ref}}(y|x)}$, the distributional preference in the unpaired setting means  that we have the following constraint:
\begin{equation}
(r_u\circ \pi_{\theta})_{\sharp}\mu_+ \underset{ \text{FSD }}{\succcurlyeq} (r_u\circ \pi_{\theta})_{\sharp}\mu_- .
\label{eq:UnpairedConstraint}
\end{equation} 
\label{def:DistUnpaired}
\end{definition}
\noindent \textbf{Distributional Paired Preference }  Note that we can rewrite \cref{eq:posnegPref} in the equivalent form:
\begin{equation}
\log \frac{\pi_{\theta}(y_+|x)}{\pi_{\theta}(y_-|x)}   \geq  \log \frac{\pi_{\mathrm{ref}}(y_+|x)}{\pi_{\mathrm{ref}}(y_-|x)} , \quad\forall (x,y_+,y_-) \sim \mu.
\label{eq:Modelref}
\end{equation}
In order to turn this into a distributional constraint we need access to a paired preference dataset as in DPO  $(X,Y_+,Y_-)\sim \mu $, and impose stochastic dominance of the random variable $Z_{\theta} = \log \frac{\pi_{\theta}(Y_+|X )}{\pi_{\theta}(Y_-|X )}$ indexed by the policy we are optimizing on the random variable $ Z_{\mathrm{ref}}= \log\frac{\pi_{\mathrm{ref}}(Y_+|X )}{\pi_{\mathrm{ref}}(Y_-|X )}$ indexed by the reference policy.
$Z_{\theta}$ and $Z_{\mathrm{ref}}$ represent here the $\log$ likelihood ratio of positive to negative outcome under the policies $\pi_{\theta}$ and $\pi_{\mathrm{ref}}$, respectively.
Hence, it is desirable to constrain the policy $\pi_\theta$ to have a larger excess log probability between positive and negative outcomes than that resulting from the reference policy $\pi_{\mathrm{ref}}$. 

We define below more formally the paired distributional preference via stochastic dominance:

\begin{definition}[Distributional Preference in the Paired Setting] We say that the policy $\pi_{\theta}$ distributionally dominates $\pi_{\mathrm{ref}}$ in terms of $\log$ probability ratio  of positive and negative responses if:
$$  \log \frac{\pi_{\theta}(Y_+|X )}{\pi_{\theta}(Y_-|X )}   \underset{ \text{FSD }}{\succcurlyeq} \log\frac{\pi_{\mathrm{ref}}(Y_+|X )}{\pi_{\mathrm{ref}}(Y_-|X )} .  $$
\label{def:Distpaired}
Noting $r_p \circ \pi_{\theta}(x,y_+,y_-) =\log \frac{\pi_{\theta}(y_+|x )}{\pi_{\theta}(y_-|x )}$ this can be written as follows:
\begin{equation}
(r_p\circ \pi_{\theta})_{\sharp}\mu \underset{ \text{FSD }}{\succcurlyeq} (r_p\circ \pi_{\mathrm{ref}})_{\sharp}\mu.
\label{eq:pairedConstraint}
\end{equation} 
\end{definition}

\section{AOT: Alignment via Optimal Transport a Convex Relaxation Approach} \label{sec:Relax}
Note that the paired and unpaired distributional preference constraints in Definitions \ref{def:DistUnpaired}  and \ref{def:Distpaired}  can be used in LLM alignment as follows:
\begin{align}
\text{Find ~}  \pi_{\theta} \in \mathcal{H} \text{ such that  }  (r_u\circ \pi_{\theta})_{\sharp}\mu_+ \underset{ \text{FSD }}{\succcurlyeq} (r_u\circ \pi_{\theta})_{\sharp}\mu_- \label{eq:FSDUnpaired} 
\tag{\text{$\mathsf{FSD}$ unpaired}}
\end{align} 
\vskip-0.2in
and 
\begin{align}
\text{Find~}  \pi_{\theta} \in \mathcal{H} \text{ such that  } (r_p\circ \pi_{\theta})_{\sharp}\mu \underset{ \text{FSD }}{\succcurlyeq} (r_p\circ \pi_{\mathrm{ref}})_{\sharp}\mu
 \label{eq:FSDPaired} 
 \tag{\text{$\mathsf{FSD}$ paired}}
\end{align} 
where $r_u$ are $r_{p}$ are given in Definitions \ref{def:DistUnpaired} and \ref{def:Distpaired} respectively, and $\mathcal{H}$ is a hypothesis class. Those two problems are instances of learning with stochastic orders introduced in \citep{domingo2022learning}, but in a simpler setting since the constraints are on one-dimensional distributions and the order considered is the first order rather than the convex order as considered in \citep{domingo2022learning}. Note that both problems are special cases of the following generic optimization problem:
 \begin{equation}
 \text{Find } \theta  \in \Theta \text{ such that  }   : U_{\theta} \underset{\text{FSD }}{\succcurlyeq} V_{\theta} 
 \label{eq:constrainedD}
 \end{equation}
  \vskip -0.1in
  where $U_{\theta}$
and $V_{\theta}$ are real-valued random variables whose distributions depend on a parameter vector $\theta \in \Theta$. Note that for our FSD paired setting, $V_{\theta}=V$ (independent of $\theta$). Let $\mu_{U_{\theta}}$ and $\mu_{V_{\theta}}$ be the probability measures of $U_{\theta}$ and $V_{\theta}$ resp.

By the definition of FSD in Equation \eqref{eq:FSD} we have: 
\[ U_{\theta} \underset{\text{FSD }}{\succcurlyeq} V_{\theta} \iff Q_{U_{\theta}} (t) \geq   Q_{V_{\theta}} (t) , \forall t \in [0,1].   \]
We can relax this problem to the following minimization problem:

 \begin{assbox}
 \vskip -0.1in
\begin{equation}
\min_{\theta \in \Theta}\varepsilon(\theta) := \int_{0}^{1} h (  Q_{U_{\theta}} (t) - Q_{V_{\theta}} (t)  )dt  ,
\label{eq:ConvexRelax}
\end{equation}
\end{assbox}
where $h$ is a function penalizing each quantile's violation of FSD.
The objective function \eqref{eq:ConvexRelax} seeks to measure the violation of FSD, so that it can be minimized or eliminated.
For instance, with $h$ the 0/1 loss (here $\mathbbm{1}$ is the indicator function):
\begin{equation}
 \min_{\theta  \in \Theta } \int_{0}^1 \mathbbm{1}_{Q_{U_{\theta}} (t)<Q_{V_{\theta}} (t)} dt,
 \label{eq:constrainedDrelaxmisclas}
 \end{equation}
 This loss reminds us the misclassification $0/1$ loss. 
Following classical convex relaxation of $0/1$ losses in binary classification \citep{Bartlett2006ConvexityCA}, we consider surrogates $h$ of the indicator function.
Our choices for $h$ are motivated by the ``almost-FSD'' notions in the literature (See Appendix \ref{app:h} for a discussion). 
In practice, we use smooth convex approximations of the 0/1 loss ($\mathbbm{1}_{x < 0}$) \citep{Bartlett2006ConvexityCA}, for example for a margin $\beta>0$ $h(x) =(\beta -x)^2_+$ the \textbf{$\beta-$ squared hinge loss}  or $h(x) =\log(1+\exp(-\beta x)) $ the \textbf{$\beta$-logistic loss}. Although not a convex relaxation of the 0/1 loss, the least squares loss has been used in classification  \citep{Losses}, and in the context of alignment, it was used in IPO \citep{azar2024general} hence we use also $h(x)=(\beta-x)^2$, and refer to it as \textbf{$\beta$-Least Squares}. 
Further discussion of tradeoffs and benefits of different losses is in Appendix \ref{app:h}, and formal assumptions on $h$ needed for the statistical theory are given in Assumption \ref{ass:OTcost}. \\

The cost function in \eqref{eq:ConvexRelax} is still computationally challenging, if we were to solve the problem via gradient descent on $\theta$ this would require us to differentiate through the quantile operation.   The following theorem from \cite{santambrogio2015otam} will be instrumental for us to cast the loss in \eqref{eq:ConvexRelax} as an optimal transport problem with a convex cost $h$:
\begin{theorem} [Theorem 2.9 and Proposition 2.17 in \cite{santambrogio2015otam}] Let $h :\mathbb{R}\to \mathbb{R}^+$ be a convex function we have for two real random variables $U,V$, with measures
 $\mu_{U},\mu_{V}$: 
\[\int_{0}^1 h(  Q_{U} (t) - Q_{V} (t) ) dt  = \min _{\gamma \in \Pi(\mu_{U},\mu_{V})} \int h(u-v) d\gamma(u,v) = \mathsf{OT}_{h}(\mu_{U},\mu_{V}) \]
and $\gamma^*=(Q_{U}, Q_{V} )_{\sharp} \mathcal{L}_1([0,1])$ is a minimizer (where $\mathcal{L}_1$ is the Lebesgue measure on $[0,1]$ ). If furthermore $h$ is strictly convex $\gamma^*$ is the unique minimizer.
\label{theo:hQuantile}
\end{theorem}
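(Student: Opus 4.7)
The plan is to establish the theorem in three steps that together identify the comonotone coupling $\gamma^* := (Q_U, Q_V)_\sharp \mathcal{L}_1([0,1])$ as a valid transport plan between $\mu_U$ and $\mu_V$, as a minimizer of the Kantorovich functional with cost $c(u,v) := h(u-v)$, and as the unique one when $h$ is strictly convex.

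First, I would verify that $\gamma^* \in \Pi(\mu_U, \mu_V)$. If $T$ is uniform on $[0,1]$, then the left-continuous quantile transform $Q_U(T)$ has law $\mu_U$ --- a standard fact valid even when $F_U$ has atoms or flat regions --- and similarly for $V$. Hence $\gamma^*$ has the correct marginals. A direct change of variables then yields
\[
\int h(u-v) \, d\gamma^*(u,v) \;=\; \int_0^1 h\bigl(Q_U(t) - Q_V(t)\bigr) \, dt,
\]
which identifies the left-hand side of the claimed identity with the transport cost realized by $\gamma^*$.

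Next, I would prove optimality. The key algebraic input is the Monge (submodularity) inequality: for any $u_1 \le u_2$ and $v_1 \le v_2$, convexity of $h$ yields
\[
h(u_1 - v_1) + h(u_2 - v_2) \;\le\; h(u_1 - v_2) + h(u_2 - v_1),
\]
since both quadruples have the same sum while $u_1 - v_1$ and $u_2 - v_2$ each lie between $u_1 - v_2$ and $u_2 - v_1$, so a one-line application of Jensen's inequality finishes the job. This is precisely the $c$-cyclical monotonicity condition restricted to pairs, and it says that any plan $\gamma$ whose support contains two points with reversed order of coordinates can be improved by a pairwise mass swap. Promoting this pairwise fact to a statement about measures --- either via the standard $c$-cyclical monotonicity characterization of optimal transport plans, or via an explicit swap-perturbation of Kantorovich competitors --- yields $\int c\,d\gamma^* \le \int c\,d\gamma$ for every $\gamma \in \Pi(\mu_U,\mu_V)$, establishing the identity. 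The main technical obstacle lies in this step, namely passing rigorously from the pairwise swap inequality to the measure-level optimality statement when $\mu_U,\mu_V$ have atoms, since then ``comonotone support'' must be defined carefully; I would handle this by the general quantile identity $Q_Z(T) \sim \mu_Z$ and the $c$-cyclical monotonicity machinery, referring to the proofs of Theorem~2.9 and Proposition~2.17 in \cite{santambrogio2015otam}.

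Finally, uniqueness under strict convexity follows by upgrading the Monge inequality above: when $u_1 < u_2$ and $v_1 < v_2$ are both strict, Jensen's inequality is strict as well, so any optimizer must be supported on a nondecreasing set. Together with the prescribed marginals this forces $\gamma = \gamma^*$, completing the argument.
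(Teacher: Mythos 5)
Your proposal is correct, but note that the paper does not actually prove this statement: it imports it wholesale from Santambrogio (Theorem 2.9 and Proposition 2.17), so there is no internal proof to compare against. Your argument is the standard one and all three steps are sound. The feasibility step ($Q_U(T)\sim\mu_U$ for $T$ uniform, valid with the left-continuous generalized inverse even in the presence of atoms) and the change of variables correctly identify the cost of $\gamma^*$ with $\int_0^1 h(Q_U(t)-Q_V(t))\,dt$. The submodularity inequality is verified correctly: writing $a=u_1-v_2$, $b=u_2-v_1$, the two differences $u_1-v_1$ and $u_2-v_2$ lie in $[a,b]$ and sum to $a+b$, so convexity gives $h(u_1-v_1)+h(u_2-v_2)\le h(a)+h(b)$, with strict inequality under strict convexity when both orderings are strict; this correctly yields both optimality of the monotone plan and, combined with the standard fact that a plan with monotone support and prescribed marginals is unique, the uniqueness claim. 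You rightly flag that the only delicate point is promoting the pairwise swap inequality to measure-level optimality (via $c$-cyclical monotonicity, or via discrete swaps plus approximation); for a complete argument one would also want to note that the $c$-cyclical-monotonicity machinery is applied under a finite-cost hypothesis, and that the case where $\gamma^*$ itself has infinite cost is handled separately (e.g., by the discrete/approximation route). As a proof sketch, nothing essential is missing, and you supply more justification than the paper does.
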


Thanks to Theorem \ref{theo:hQuantile} we can write the problem \eqref{eq:ConvexRelax}, in the following equivalent form that we call Alignment via Optimal Transport ($\mathsf{AOT}$) :
\vskip -0.1in
\begin{align}
\min_{\theta \in \Theta} \int_{0}^{1} h (  Q_{U_{\theta}} (t) - Q_{V_{\theta}} (t)  )dt & =\min_{\theta \in \Theta}  \mathsf{OT}_{h}(\mu_{U_{\theta}},\mu_{V_{\theta}}) 
 = \min_{\theta \in \Theta} \min _{\gamma \in \Pi(\mu_{U_{\theta}},\mu_{V_{\theta}})} \int h(u-v) d\gamma(u,v).
\label{eq:AlignOT}
\end{align} 
This formulation reveals that we have turned the stochastic dominance constraint to an inner one-dimensional optimal transport problem with a convex cost $h$. $\mathsf{OT}_{h}(\mu_{U_{\theta}},\mu_{V_{\theta}})$ can be thought as a soft measure of the violation of the stochastic dominance of $U_{\theta}$ on $V_{\theta}$, hence by minimizing it as function of $\theta$ we are ensuring the optimal $\theta^*$  results in  $U_{\theta^*}$ dominating $V_{\theta^*}$. Such OT problems with a smooth cost have been subject to theoretical and statistical study in one dimension as well as in high dimensions. For instance, \citep{manole2024sharp}  considered smooth convex costs, and  \citep{hundrieser2022empirical} considered more general smooth costs. \citep{groppe2023lower} considered entropic regularization of optimal transport with general smooth costs. \\

\paragraph{Computational Algorithm via Sorting} We consider here empirical measures and turn to solve the inner problem for a fixed $\theta$. We omit $\theta$ in what follows to simplify notation. We are interested in  $\mathsf{OT}_{h}(\hat{\mu}_{U},\hat{\mu}_{V})$  where
$\hat{\mu}_{U} =\frac{1}{n}\sum_{i=1}^n \delta_{u_{i}} ~~ \hat{\mu}_{V} =\frac{1}{n}\sum_{i=1}^n \delta_{v_{i}}. $
Given the convexity of $h$ and thanks to Theorem \eqref{theo:hQuantile}, the optimal coupling of $\mathsf{OT}_{h}(\hat{\mu}_{U_{\theta}},\hat{\mu}_{V_{\theta}})$  is given by  the north-west corner solution \citep{peyre2019computational} (Chapter 3, Section 3.4.2) that informally matches the $i-$th smallest element of $U$ with the $i-$th smallest element from $V$.
More formally, if we sort the variables $u_i$ and get the order statistics (from min to max)
$u^{(1)} \leq ... \leq u^{(n)}$ 
and same for $v_i$: $v^{(1)} \leq ... \leq v^{(n)}$.
We have:
\vskip -0.2in
\begin{equation}
\mathsf{OT}_{h}(\hat{\mu}_{U},\hat{\mu}_{V}) = \frac{1}{n} \sum_{i=1}^n h(u^{(i)} -v^{(i)}) .
\end{equation}
Back to \eqref{eq:AlignOT}, given empirical samples $\hat{\mu}_{U_{\theta}} =\frac{1}{n}\sum_{i=1}^n \delta_{u^i_{\theta}}$ and  $\hat{\mu}_{V_{\theta}} =\frac{1}{n}\sum_{i=1}^n \delta_{v^i_{\theta}}$, let $u^{(i)}_{\theta},v^{(i)}_{\theta}$ be the order statistics as function of $\theta$. We have therefore:
\begin{assbox}
\vskip -0.2in
\begin{align}
\min_{\theta \in \Theta}  \mathsf{OT}_{h}(\hat{\mu}_{U_{\theta}},\hat{\mu}_{V_{\theta}})  = \min_{\theta \in \Theta}  \frac{1}{n} \sum_{i=1}^n h(u^{(i)}_{\theta} -v^{(i)}_{\theta}) ~~~ (\mathsf{AOT})
\label{eq:AOT}
\end{align}
\end{assbox}
In Appendix \ref{app:Grads}, we show that the gradients of the objective \eqref{eq:AOT} are asymptotically unbiased for bounded distributions (see the statement for all conditions).
Note that the sorting operation in \eqref{eq:AOT} is a 1-Lipschitz function with discontinuous Jacobian.\footnote{Its Jacobian is a permutation matrix at every point except a measure-zero set where it is not differentiable.} Like the ReLU activation function, it can be easily optimized by gradient descent \citep{anil2019sorting} (compare also sliced Wasserstein GANs). In practice, computing the gradient at any given step is done by first running the sorting algorithm and taking the gradient with respect to $\theta$ with the current assignment held fixed. \\



\paragraph{$\mathsf{AOT}$ for Unpaired Preference} Let $\hat{\mu}^n_+=\frac{1}{n} \sum_{i=1}^n \delta_{(x_{i,+}, y_{i,+})}$ and $\hat{\mu}^n_-=\frac{1}{n} \sum_{i=1}^n \delta_{(x_{i,-}, y_{i,-})}$. Our convex relaxation approach for unpaired FSD  alignment  given in \eqref{eq:FSDUnpaired} can therefore be cast as an $\mathsf{AOT}$ problem (given in Equation \eqref{eq:AOT}) for 
\[ u^i_{\theta} =\log \frac{\pi_{\theta}(y_{i,+}|x_{i,+})}{\pi_{\mathrm{ref}}(y_{i,+}|x_{i,+})} , ~~
 v^i_{\theta} =\log \frac{\pi_{\theta}(y_{i,-}|x_{i,-})}{\pi_{\mathrm{ref}}(y_{i,-}|x_{i,-})}, \quad i=1,\ldots,n.\]
 
\paragraph{$\mathsf{AOT}$ for Paired Preference} Let $\hat{\mu}^{n} =\frac{1}{n}\sum_{i=1}^n \delta_{(x_i,y_{i,+},y_{i,-})}$ be a paired preference empirical measure. Our convex relaxation approach for paired FSD alignment given in \eqref{eq:FSDPaired} can be there cast as an $\mathsf{AOT}$ problem (given in Equation \eqref{eq:AOT}) for: 
\[  u^i_{\theta} = \log \frac { \pi_{\theta}(y_{i,+}|x_i)} {\pi_{\theta}(y_{i,-}|x_i)}, ~~v^i_{\theta} =  \log \frac{ \pi_{\mathrm{ref}}(y_{i,+}|x_i)} {\pi_{\mathrm{ref}}(y_{i,-}|x_i)},  \quad i=1,\ldots,n.  \]


\paragraph{$\mathsf{AOT}$ with Soft Sorting} One caveat of the alternating optimization for $\mathsf{AOT}$ between $\theta$ and solving the inner optimal transport problem with hard sorting is that the gradient with respect to the parameter $\theta$ for fixed permutations has dependency in $\theta$  on the order statistics level only and not through the sorting routine. To alleviate that, we propose to use $\mathsf{SoftSorting}$ \citep{blondel2020fast,cuturi2019differentiable} that uses an entropic regularization to find a smoothed permutations via a Sinkhorn algorithm, which in turn allows the back-propagation on $\theta$  to depend not only via the order statistics but also via the computational graph of  $\mathsf{SoftSorting}$. 

Algorithms \ref{Alg:AOT_unpaired} and \ref{Alg:AOT_paired} in Appendix \ref{app:alg} summarize our $\mathsf{AOT}$ approach for distributional preference alignment in the unpaired and paired setting.

\section{ Statistical Analysis}

In this section, we focus on the statistical analysis of unpaired-$\mathsf{AOT}$ and defer paired-$\mathsf{AOT}$ to Appendix \ref{app:aotpaired} since it has a similar analysis. We make the following assumptions on the OT cost $h$, the reward $r$, and the policy hypothesis class $\mathcal{H}$.

\begin{assumption} [OT cost] Let $M,R>0$ be finite positive constants. We assume that the loss   $h: [-M,M] \to [0,R]$, is convex $L$-Lipchitz and bounded. $h$ is a convex function (E.g. a relaxation of the 0/1 loss such that $h(t)>h(t'),$ for $t <0$ and $t'>0$).
\label{ass:OTcost}
\end{assumption}
\begin{assumption}[Reward] We assume that $r$ is bounded so that $r\circ \pi_{\theta}(x,y) \in [-M,M] $.
 \label{ass:reward}
  \end{assumption}
\begin{assumption}[Assumption on the  hypothesis class of the policy] We assume $ \pi_{\mathrm{ref}}, \pi_{\theta} \in  \mathcal{H} = \{ \pi_{\theta}:  \text{ such that }   r\circ \pi_{\theta} \text{ differentiable in } \theta  \text{ and } \sup_{x \in \mathcal{X},y\in \mathcal{Y}} \nor{\nabla_\theta r\circ \pi_{\theta}(y|x)} \leq L'  , \theta \in  \Theta \subset B_2(r_0,d_{\theta})   \} ,$ for $L',r_0>0$.
 \label{ass:hypothesis}
.\end{assumption}
\begin{assumption} There exists $\pi_{\theta} \in \mathcal{H}$ such that $(r\circ \pi_{\theta} )_{\sharp} \mu_+ \underset{\text{FSD }}{\succcurlyeq}  (r\circ \pi_{\theta}  )_{\sharp} \mu_- $.
\label{ass:existence}
\end{assumption}

Assumption \ref{ass:OTcost} is satisfied for example by  the hinge squared loss $h(t) = (-t)^2_+$ by the logistic loss $h(t)= \log(1+e^{-\beta t})$, for $t\in [-M,M]$. Assumption \ref{ass:reward} on the boundedness of the rewards can be imposed by clamping the values of the logits of the policies to $[-M,M]$, which is common practice in practical implementations of LLM alignment. Assumption \ref{ass:hypothesis} is a technical assumption needed to control the covering number of the $r\circ \mathcal{H}$. Assumption \ref{ass:existence} ensures the existence of the minimizer in $\mathcal{H}$. We overload notations in what follows and refer to $r_u$ and $r_p$ as $r$ to simplify the presentation.
By our relaxation approach described in Section \ref{sec:Relax} we can relax the unpaired  stochastic dominance constraint problem given in \eqref{eq:FSDUnpaired} to:
\begin{align}
\min_{ \pi_{\theta} \in \mathcal{H}}  \int_0^1 h \left( Q_{(r\circ \pi_{\theta}  )_{\sharp} \mu_+ }(t) - Q_{(r\circ \pi_{\theta}  )_{\sharp} \mu_-} (t)\right)dt   =\min_{ \pi_{\theta} \in \mathcal{H}}\mathsf{OT}_{h} \left( (r\circ \pi_{\theta}  )_{\sharp} \mu_+ , (r\circ \pi_{\theta}  )_{\sharp} \mu_- \right) 
\label{eq:aot}
\tag{\textbf{$\mathsf{uAOT}_{h}$}}
\end{align}
Define the OT cost $c: [-M,M]\times [-M,M] \to [0,R]$ such that  $c(z,z')=h(z-z')$, for $z,z\in [-M,M]$. Define the $c$-transform of a function $\varphi: [-M,M] \to \mathbb{R}$:
\[\varphi^{c}(z) = \inf_{z'\in [-M,M] } h(z-z')-\varphi(z). \]
 In our setting, a function is called $c$-concave if there exists $\psi:[-M,M] \to \mathbb{R}$ such that $\varphi =\psi^c$. Define: 
 \[\mathcal{F}_{c} =\{ \varphi: [-M,M] \to [-R,R] ,  \varphi \text{ is }  c\text{-concave, with } ||\varphi^{c}||_{\infty} \leq R \}\]
By  duality (Theorem 5.10 in \citep{villani2009optimal})  we have:
\[\mathsf{OT}_{h} \left( (r\circ \pi_{\theta}  )_{\sharp} \mu_+ , (r\circ \pi_{\theta}  )_{\sharp} \mu_- \right) = \sup_{\varphi \in \mathcal{F}_{c}} \int \varphi( r\circ \pi_{\theta}) d\mu_+ - \int \varphi^{c}( r\circ \pi_{\theta}) d\mu_-  . \]
Replacing the dual  expression of $\mathsf{OT}_{h}$  in  \eqref{eq:aot}, we see that \eqref{eq:aot} can be cast as a min-max problem:
\begin{equation}
\min_{ \pi_{\theta} \in \mathcal{H}}   \sup_{\varphi \in \mathcal{F}_{c}} \int \varphi( r\circ \pi_{\theta}) d\mu_+ - \int \varphi^{c}( r\circ \pi_{\theta}) d\mu_-.  
\label{eq:aotpopulation}
\end{equation}
Given samples $\hat{\mu}^n_+=\frac{1}{n} \sum_{i=1}^n \delta_{(x_{i,+}, y_{i,+})}$ and $\hat{\mu}^n_-=\frac{1}{n} \sum_{i=1}^n \delta_{(x_{i,-}, y_{i,-})}$, the empirical problem is:
\begin{equation}
\min_{\pi_{\theta} \in \mathcal{H}}\sup_{\varphi \in \mathcal{F}_{c}} \int \varphi( r\circ \pi_{\theta}) d \hat{\mu}^n_+- \int \varphi^{c}( r\circ \pi_{\theta}) d\hat{\mu}^n_-.
\label{eq:erm}
\end{equation}

Recall that $\mathsf{OT}_{h}$ is a measure of the violation of stochastic dominance of  $(r\circ \pi_{\theta} )_{\sharp} \mu_+$ on  $(r\circ \pi_{\theta} )_{\sharp} \mu_-$. We have the following result on the  sample complexity of the violation of stochastic dominance:  
\begin{theorem} [Sample Complexity of Dominance Violation for $\mathsf{AOT}$ Unpaired] Let $\pi_{\theta^*}$ be the population  minimizer of \eqref{eq:aot} 
and $\pi_{\hat{\theta}_n}$ be the solution of the empirical problem \eqref{eq:erm}. We have the following sample complexity bound for the violation of stochastic dominance  in $\mathsf{AOT}$ unpaired:
\begin{align*}
&\mathbb{E} ~\mathsf{OT}_{h} \left( (r\circ \pi_{\hat{\theta}_n}  )_{\sharp} \mu_+ , (r\circ \pi_{\hat{\theta}_n}  )_{\sharp} \mu_- \right)
 \leq \underbrace{\mathsf{OT}_{h} \left( (r\circ \pi_{\theta^*}  )_{\sharp} \mu_+ , (r\circ \pi_{\theta^*}  )_{\sharp} \mu_- \right)}_{\text{Optimal Almost FSD Violation}}\\
 & + \underbrace{ 2 \mathcal{R}_{n}( \mathcal{F}_{c}; (r\circ \pi_{\theta^*}  )_{\sharp} \mu_+ ) + 2 \mathcal{R}_n(  \mathcal{F}_{c}^{c}; (r\circ \pi_{\theta^*}  )_{\sharp} \mu_-) }_{\text{One dimensional OT sample complexity with optimal $\theta^*$}}
  + \underbrace{2 \mathcal{R}_{n}(\mathcal{F}_c \circ r\circ \mathcal{H}; \mu_+)  + 2\mathcal{R}_{n} (\mathcal{F}^c_c \circ r\circ \mathcal{H};\mu_-)}_{\text{Complexity of learning in $\mathcal{H}$ via the  1D OT problem} },
\end{align*}
where $\mathcal{R}_{n} (\mathcal{F}; \nu) =\mathbb{E} \sup_{\varphi \in \mathcal{F}} \left| \frac{1}{n}  \sum_{i=1}^n \sigma_i \varphi(Z_i)\right|$ is the Rademacher Complexity and for $i=1\dots n$,
$\sigma_i$ are independent Rademacher random variables and $Z_i \sim \nu$ iid. 
\label{theo:AOT_unpaired}
\end{theorem}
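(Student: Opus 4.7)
\medskip
\noindent\textbf{Proof plan.}
The plan is a classical excess-risk / uniform-deviation analysis, with the twist that the loss $L(\theta) := \mathsf{OT}_{h}((r\circ \pi_{\theta})_{\sharp}\mu_+, (r\circ \pi_{\theta})_{\sharp}\mu_-)$ is itself a supremum, so I would unfold it through Kantorovich duality before symmetrizing. Writing $\hat L_n(\theta)$ for the empirical counterpart obtained by replacing $\mu_\pm$ with $\hat\mu_\pm^n$, and using $\hat L_n(\hat\theta_n) \leq \hat L_n(\theta^*)$ by optimality of the ERM in \eqref{eq:erm}, I begin from the identity
\begin{equation*}
L(\hat\theta_n) \leq L(\theta^*) + \big[L(\hat\theta_n) - \hat L_n(\hat\theta_n)\big] + \big[\hat L_n(\theta^*) - L(\theta^*)\big].
\end{equation*}
Taking expectations, the first term on the right is exactly the ``Optimal Almost FSD Violation'' in the theorem; the remaining two stochastic fluctuations must be controlled, the first uniformly in $\theta \in \Theta$ (since $\hat\theta_n$ is data-dependent) and the second at the fixed, deterministic point $\theta^*$.

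To turn these fluctuations into empirical processes, I use the dual representation stated just before the theorem,
$$L(\theta) = \sup_{\varphi \in \mathcal{F}_c} \int \varphi(r\circ \pi_\theta)\,d\mu_+ - \int \varphi^c(r\circ \pi_\theta)\,d\mu_-,$$
together with the elementary inequality $\sup_\varphi f(\varphi) - \sup_\varphi g(\varphi) \leq \sup_\varphi (f-g)(\varphi)$. Applied to $L(\theta) - \hat L_n(\theta)$ this produces a supremum over $\mathcal{F}_c$ of $\int \varphi \circ r \circ \pi_\theta\, d(\mu_+ - \hat\mu_+^n) - \int \varphi^c \circ r \circ \pi_\theta\, d(\mu_- - \hat\mu_-^n)$, which by the triangle inequality splits into a ``positive-side'' and a ``negative-side'' empirical process. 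For the $\hat\theta_n$ term I additionally take a supremum over $\pi_\theta \in \mathcal{H}$, so the positive side is indexed by $\mathcal{F}_c \circ r \circ \mathcal{H}$ integrated against $\mu_+ - \hat\mu_+^n$ and the negative side by $\mathcal{F}_c^c \circ r \circ \mathcal{H}$ integrated against $\mu_- - \hat\mu_-^n$; for the $\theta^*$ term the index class stays $\mathcal{F}_c$ (resp.\ $\mathcal{F}_c^c$) and the empirical process is evaluated on iid samples from the fixed pushforwards $(r\circ\pi_{\theta^*})_\sharp \mu_\pm$.

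I finish by applying the standard Rademacher symmetrization lemma to each of the four empirical processes produced above, which introduces iid Rademacher random variables and a factor $2$, and directly yields the two ``1D OT sample complexity'' terms $2\mathcal{R}_n(\mathcal{F}_c; (r\circ\pi_{\theta^*})_\sharp\mu_+) + 2\mathcal{R}_n(\mathcal{F}_c^c; (r\circ\pi_{\theta^*})_\sharp\mu_-)$ at $\theta^*$, plus the two ``complexity of learning in $\mathcal{H}$'' terms $2\mathcal{R}_n(\mathcal{F}_c \circ r \circ \mathcal{H}; \mu_+) + 2\mathcal{R}_n(\mathcal{F}_c^c \circ r \circ \mathcal{H}; \mu_-)$ on the composed classes. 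The main point requiring care, and the reason that restricting to $\mathcal{F}_c$ is essential, is the nonlinear $c$-transform: one has to ensure that $\mathcal{F}_c^c := \{\varphi^c : \varphi \in \mathcal{F}_c\}$ is a bona fide function class of uniformly bounded (by $R$) functions on $[-M,M]$, so that symmetrization applies and so that the two sides of the dual can be handled independently. Under Assumptions \ref{ass:OTcost}--\ref{ass:reward} the rewards $r \circ \pi_\theta$ take values in $[-M,M]$ and $\|\varphi^c\|_\infty \leq R$ is built into the definition of $\mathcal{F}_c$, which is precisely what makes the $\varphi^c$-part of the empirical process mirror the $\varphi$-part and produces the symmetric Rademacher bound announced in the statement.
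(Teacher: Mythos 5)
Your proposal is correct and follows essentially the same route as the paper's proof: the same three-term excess-risk decomposition via ERM optimality, the same use of the Kantorovich dual with the $\sup$-difference inequality to split each fluctuation into a $\mu_+$-side process over $\mathcal{F}_c$ (resp.\ $\mathcal{F}_c\circ r\circ\mathcal{H}$) and a $\mu_-$-side process over $\mathcal{F}_c^c$ (resp.\ $\mathcal{F}_c^c\circ r\circ\mathcal{H}$), and the same Rademacher symmetrization of the four resulting terms. No gaps.
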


By considering our assumptions on the cost, the reward, and the hypothesis class, we obtain the parametric rate in $n$:

\begin{corollary} (Informal)  Under Assumptions \ref{ass:OTcost}, \ref{ass:reward} and \ref{ass:hypothesis} we have: 
\begin{enumerate}
\item $\mathbb{E} ~\mathsf{OT}_{h} \left( (r\circ \pi_{\hat{\theta}_n}  )_{\sharp} \mu_+ , (r\circ \pi_{\hat{\theta}_n}  )_{\sharp} \mu_- \right)
- \mathsf{OT}_{h} \left( (r\circ \pi_{\theta^*}  )_{\sharp} \mu_+ , (r\circ \pi_{\theta^*}  )_{\sharp} \mu_- \right) \lesssim  n^{-\frac{1}{2}},$
where $\lesssim$ refers to inequality up to constants that depend only on constants in the assumptions. 
\item If in addition Assumption \ref{ass:existence} holds  and $h(t) = (-t)^2_{+}$, we have:
$ \mathbb{E} ~\mathsf{OT}_{h} \left( (r\circ \pi_{\hat{\theta}_n}  )_{\sharp} \mu_+ , (r\circ \pi_{\hat{\theta}_n}  )_{\sharp} \mu_- \right)
\lesssim  n^{-\frac{1}{2}}. $
\end{enumerate}
\label{cor:parametricUnpaired}
\end{corollary}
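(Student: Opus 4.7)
The plan is to apply the decomposition from Theorem \ref{theo:AOT_unpaired} and bound each of the four Rademacher-complexity terms on its right-hand side by $O(n^{-1/2})$, after which Part (2) will follow by observing that the non-random ``Optimal Almost FSD Violation'' term is actually zero under Assumption \ref{ass:existence} when $h(t)=(-t)_+^2$. All dependence of the implicit constants on $M,R,L,L',r_0,d_\theta$ is acceptable by the statement of the corollary.

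For the two 1D OT sample-complexity terms $\mathcal{R}_n(\mathcal{F}_c; \cdot)$ and $\mathcal{R}_n(\mathcal{F}_c^c; \cdot)$, I would first observe that Assumption \ref{ass:OTcost} forces every $\varphi \in \mathcal{F}_c$ (and its $c$-transform $\varphi^c$) to be $L$-Lipschitz on $[-M,M]$, since any $c$-concave function is an infimum of the $L$-Lipschitz maps $z \mapsto h(z-z')-\psi(z')$ and infima of uniformly Lipschitz families are Lipschitz. Combined with the uniform bound $R$ built into the definition of $\mathcal{F}_c$, this places both $\mathcal{F}_c$ and $\mathcal{F}_c^c$ inside the class of $L$-Lipschitz, $R$-bounded real functions on the bounded interval $[-M,M]$. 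The classical bracketing-entropy estimate $\log N_{[]}(\varepsilon,\mathcal{F}_c,L^2(\nu)) \lesssim LM/\varepsilon$ then makes Dudley's entropy integral finite and produces a Rademacher complexity of order $n^{-1/2}$.

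For the two composition terms $\mathcal{R}_n(\mathcal{F}_c \circ r \circ \mathcal{H}; \mu_\pm)$, the functions have the form $(x,y)\mapsto \varphi(r\circ\pi_\theta(x,y))$ with $\varphi$ in the Lipschitz/bounded class above and $\theta$ ranging in the bounded set $\Theta \subset B_2(r_0,d_\theta)$. Assumption \ref{ass:hypothesis} implies, via the mean-value theorem, that $\theta \mapsto r\circ \pi_\theta(x,y)$ is $L'$-Lipschitz uniformly in $(x,y)$, so a standard product-of-coverings argument (cover $\Theta$ at scale $\varepsilon/(LL')$ and cover $\mathcal{F}_c$ at scale $\varepsilon$) yields
\[\log N_{[]}\bigl(\varepsilon,\mathcal{F}_c \circ r \circ \mathcal{H}, L^2(\mu_\pm)\bigr) \lesssim d_\theta \log\!\left(\frac{r_0 L L'}{\varepsilon}\right) + \frac{LM}{\varepsilon}.\]
Dudley chaining still gives the parametric rate $O(n^{-1/2})$, and together with Step 2 this completes Part (1).

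For Part (2), Assumption \ref{ass:existence} produces some $\pi_\theta \in \mathcal{H}$ for which $Q_{U_\theta}(t)-Q_{V_\theta}(t)\geq 0$ for all $t\in[0,1]$; since $h(s)=(-s)_+^2$ vanishes on $[0,\infty)$, Theorem \ref{theo:hQuantile} gives $\mathsf{OT}_h((r\circ\pi_\theta)_\sharp\mu_+,(r\circ\pi_\theta)_\sharp\mu_-)=0$, so the population minimizer $\pi_{\theta^*}$ achieves $0$ and Part (1) closes the argument. The principal subtlety is Step 2: one must argue that the infinite-dimensional dual class $\mathcal{F}_c$ is effectively Donsker, which hinges on the Lipschitz regularity that $h$ transfers to its $c$-concave functions. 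This is precisely the feature of the smooth convex relaxation that fails for the raw $0/1$ indicator, and it is what unlocks the parametric rate for this otherwise non-parametric dual problem; the remaining composition/covering argument in Step 3 is then routine, and the vanishing-bias argument for Part (2) is elementary once the structure of $h(s)=(-s)_+^2$ is invoked.
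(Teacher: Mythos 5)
Your proposal is correct and follows essentially the same route as the paper's proof: the decomposition of Theorem \ref{theo:AOT_unpaired}, metric-entropy bounds of order $\varepsilon^{-1}$ for the dual class $\mathcal{F}_c$ (and $\mathcal{F}_c^c$), a product-of-coverings bound of the form $d_\theta\log(r_0LL'/\varepsilon)+O(\varepsilon^{-1})$ for the composition class, Dudley chaining to get $n^{-1/2}$, and Assumption \ref{ass:existence} to kill the optimal violation term for $h(t)=(-t)_+^2$. The only cosmetic difference is that you derive the $L$-Lipschitz property of $c$-concave potentials directly as an infimum of uniformly Lipschitz maps, whereas the paper obtains the same entropy bound by invoking Theorem 3.11 of \citep{hundrieser2022empirical} in the case $\alpha=1$, $d=1$.
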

We see that under our assumptions and for the hinge loss squared, the expected violation of the desired dominance in AOT unpaired converges to zero as $n\to \infty$.

\begin{remark} While in Section  \ref{sec:Relax}, we used the primal formulation to compute $\mathsf{OT}_{h}$ due to its computational appeal thanks to the sorting algorithm we used for analyzing the sample complexity the dual of  $\mathsf{OT}_{h}$. The dual reveals the game theoretic aspect of $\mathsf{AOT}$ as a min-max game between the policy $\pi_{\theta}$ and the dual potential $\varphi_{c}$ that imposes FSD on the preference we want to infuse to the policy. 
\end{remark}

\section{Experiments}
\label{sec:experiments}

In this section, we evaluate the performance of the proposed $\mathsf{AOT}$ method on a diverse set of base LLMs and datasets, comparing with currently available alternative alignment algorithms.\\

\paragraph{LLM Alignment Alternatives} We compared $\mathsf{AOT}$ with current state-of-the-art alignment approaches, specifically Direct Preference Optimization (DPO) \citep{rafailov2024direct},  Kahneman-Tversky Optimization (KTO) \citep{ethayarajh2024kto} and Identity Policy Optimization (IPO) \citep{azar2024general}.  DPO and IPO operate on paired preference data, while KTO can handle both paired and unpaired prompt/response samples. \\

\begin{table}[!ht]
\centering
\resizebox{0.9\textwidth}{!}{%
\begin{tabular}{@{}lccccccc@{}}
\toprule
 &
  \begin{tabular}[c]{@{}c@{}}AlpacaEval\\ (GPT4)\end{tabular} &
  ARC &
  Hellaswag &
  MMLU &
  Truthful &
  Winogrande &
  GSM8K \\ \midrule
AOT paired    & 29.9          & 82.5          & 66.1          & 62.9          & 50.8          & 74.4          & 53.1          \\
AOT unpaired  & \textbf{31.3} & 82.5          & \textbf{66.2} & 62.8          & \textbf{51.1} & 74.4          & 51.8          \\
DPO           & 27.4          & \textbf{82.8} & 65.8          & \textbf{63.1} & 50.6          & 74.3          & 52.0          \\
KTO           & 24.9          & 82.7          & 65.4          & 63.0          & 48.7          & \textbf{74.9} & \textbf{53.9} \\
IPO           & 27.7          & 82.4          & 65.1          & 63.0          & 46.5          & 74.0          & 52.3          \\
Merlinite-7B  & 17.1          & 81.6          & 63.2          & 62.6          & 42.0          & 73.9          & 45.2          \\
\end{tabular}%
}
\caption{\texttt{Merlinite-7B} trained on UltraFeedback Binarized. $\mathsf{AOT}$ results in the best performing LLM as compared to the alternative alignment algorithms on AlpacaEval, and is competitive across the other benchmarks that are evaluated in the zero shot regime.}
\label{tab:merlinite_ultra}
\end{table}

\noindent \textbf{Reference Models} Traditionally, model alignment is the third and final step applied to the LLM that already has gone through original pretraining and supervised fine-tuning. For our experiments, we selected a range of models at various stages and with different levels of performance, all in the family of 7B-parameter models. Specifically, we used \texttt{Merlinite-7B} \citep{sudalairaj2024lab}, which is a variant of \texttt{Mistral-7B-v0.1} that has been instruction-tuned (SFT) on data from a synthetic data generator using a taxonomy-driven data curation process. In  Appendix \ref{app:additional_exp} we also cover other popular LLMs, such as \texttt{Mistral-7B} \citep{jiang2023mistral}, \texttt{OpenHermes-2.5-Mistral-7B} \citep{teknium_openhermes_2024}, \texttt{Starling} \citep{starling2023}, \texttt{Mistral-7B} \cite{jiang2023mistral}, and \texttt{Llama3-8B} \citep{llama3modelcard}.

\begin{figure}[th!]
    \centering
    \includegraphics[width=0.9\linewidth]{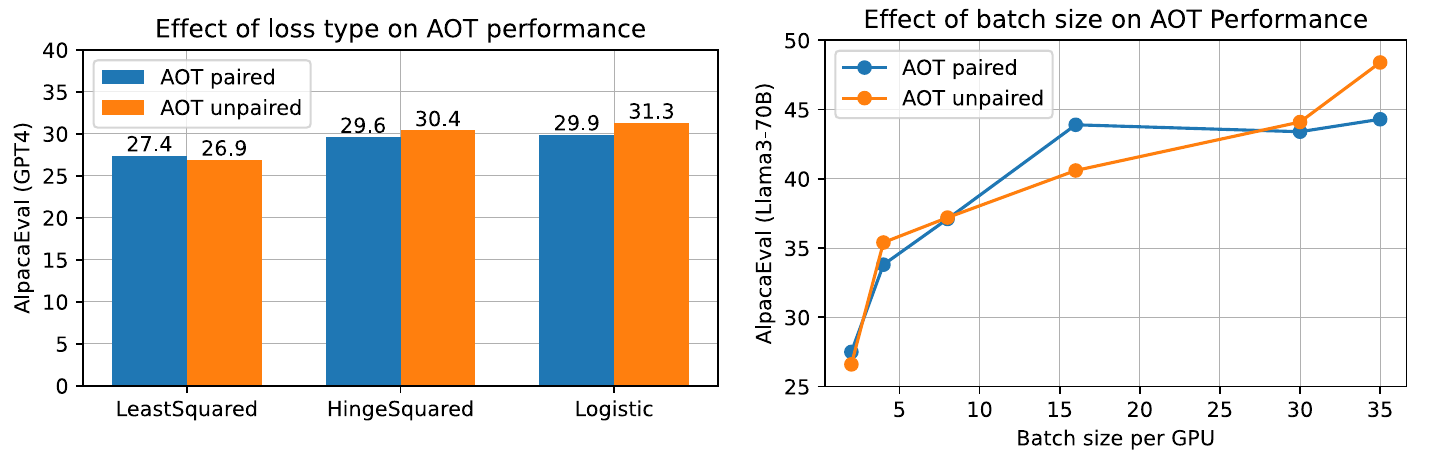}
    \caption{Impact of batch size and loss type on AOT performance. The batch size is the effective number of samples in the mini-batch per GPU. We found the logistic loss to be performing better than least squared or hinge squared losses (all using $\beta=0.01$). As we increase batch size, we also observed improvement in AOT performance, which is expected as more samples per minibatch results in a better effect of stochastic dominance (conforming Corollary \ref{cor:parametricUnpaired}).}
    \label{fig:bs_loss_plot}
\end{figure}

\noindent \textbf{Datasets} For our experiments, we used both paired and unpaired datasets. For the paired dataset, we used the UltraFeedback binarized dataset from \citep{tunstall2023zephyr}, containing over 60K training samples, where for each prompt, there is a pair of chosen (preferred) and rejected (not preferred) responses. This alignment dataset is widely used, and all compared alignment techniques are well-suited for it. For unpaired datasets, we used PKU BeaverTails \citep{beavertails} with over 300K samples and HelpSteer \citep{wang2023helpsteer} with around 35K samples. Here, for each prompt, there is only a single response with a score defined by some attributes (e.g., safety, faithfulness, helpfulness, etc.). We used the sum of attribute values and thresholded by the median to binarize the responses into chosen and rejected. For this unpaired dataset, only KTO and our $\mathsf{AOT}$ are applicable. \\

\begin{figure}[h!]    
\includegraphics[width=0.6\linewidth]{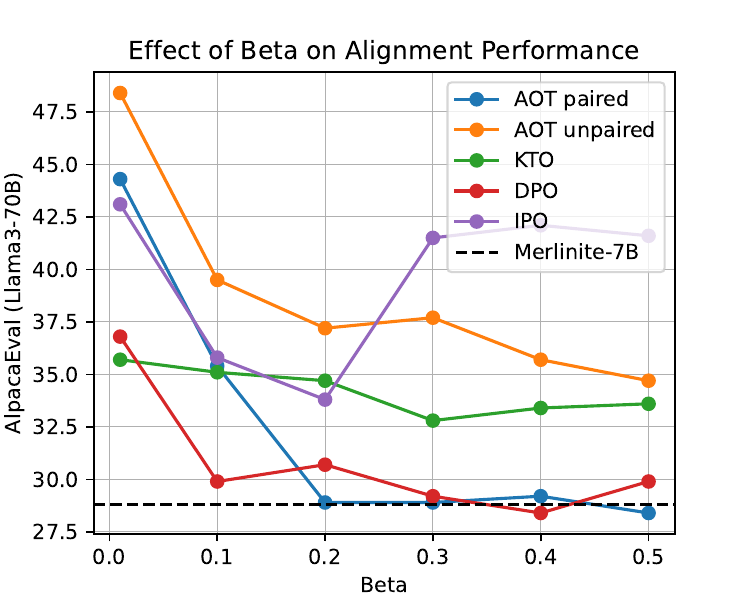}
\caption{Impact of ($\beta$) parameter on performance of different alignment algorithms. $\beta$ controls the divergence of the policy model from the initial reference model (low beta - more divergence, high beta - less divergence). We see a general trend that with higher betas, LLMs alignment decreases the performance. Hence, for all experiments, we selected $\beta = 0.01$ as a default value.}
\label{fig:beta_plot}
\end{figure}

\noindent \textbf{Metrics}
To measure the performance of different alignment methods, we used popular evaluation metrics, AlpacaEval \citep{dubois2024length} and Open LLM benchmark \citep{open-llm-leaderboard}. We note that Alpaca uses \texttt{GPT4} model as a judge to compare candidate responses to \texttt{GPT4}-based references on a set of 805 challenging questions. The \texttt{GPT4}-based evaluations are expensive, so to limit our expenses, we also employed a very strong and capable \texttt{Llama3-70B-Instruct} \citep{llama3modelcard} as a judge. As we show in Appendix \ref{app:additional_exp} in Table \ref{tab:merlinite_ultra_full}, the order determined by \texttt{Llama3-70B-Instruct} and \texttt{GPT4} is the same (the absolute score values are different), providing a better free alternative LLM-judge for local Alpaca evaluations. For intermediate results, we also employed Tiny Benchmarks \citep{polo2024tinybenchmarks} to approximate original metrics and provide fast feedback during the initial development. We also evaluated the aligned models on six key benchmarks from the Open LLM Leaderboard: AI2 Reasoning Challenge - ARC (grade-school science questions), HellaSwag (commonsense inference), MMLU (multi-task accuracy), TruthfulQA (tendency to reproduce falsehoods), Winogrande (commonsense reasoning) and GSM8K (grade school math word problems). Note that in all the above benchmarks we use 0-shot prompts, a more challenging setting as opposed to commonly used few-shot prompting.\\

\noindent\textbf{Experimental Setup}
Our implementation is based on the HuggingFace Alignment Handbook \cite{alignment_handbook2023}. As we show in Appendix in Section \ref{app:alg}, the changes needed to adapt HF TRL trainer \citep{vonwerra2022trl} for $\mathsf{AOT}$ are minimal and therefore can easily be adapted by the community. 
For each run our compute setup consisted of 8 H100 GPUs. We used LoRA \citep{hu2021lora} for parameter-efficient fine-tuning during alignment and the FSDP (Fully-Sharded Data-Parallel) setup to train the model over multiple GPUs. Under this setup, the training of each 7B-parameter model on the UltraFeedback dataset took approximately one hour. The evaluation on AlpacaEval and Open LLM benchmarks took one additional hour to get the final results. \\

\noindent \textbf{Results} In Table \ref{tab:merlinite_ultra}, we present the main results of comparing AOT to other baselines (KTO, DPO, and IPO) on paired UltraFeedback binarized dataset. On AlpacaEval (\texttt{GPT4}), our AOT unpaired approach scores 31.3\%, which is a significant gain from the base Merlinite-7B model. As of time of this writing (May 22nd, 2024), this result places our $\mathsf{AOT}$ aligned LLM on AlpacaEval LeaderBoard ahead of such strong competitors as \texttt{KTO-Mistral-PAIR} \citep{ethayarajh2023halos} and other 7B-parameter models, reaching the level of \texttt{Mixtral-8x22B-v0.1} (see Figure \ref{fig:alpaca_eval_plot} in Appendix for an illustration). On other LLM benchmarks $\mathsf{AOT}$ performs competitively to other baselines. As mentioned earlier, these evaluations are done using 0-shot prompts, leading to a more challenging setting and resulting in overall lower performance across metrics and baselines. For other base LLMs we show their performance in Appendix \ref{app:additional_exp} (see Tables \ref{tab:openhermes}, \ref{tab:starling}, \ref{tab:llama3}, and \ref{tab:mistral}).

We also examined the effect of batch size and the choice of loss function on $\mathsf{AOT}$ performance, results shown in  \cref{fig:bs_loss_plot}. As the batch size increases, AlpacaEval (based on \texttt{Llama3-70B-instruct}) also increases in line with our theory in Corollary \ref{cor:parametricUnpaired}. Note that our current setup (FSDP over 8 H100 GPUs) limits our batch size to 35 samples per GPU. We have also examined the impact of beta (controlling divergence of policy from reference) on $\mathsf{AOT}$ performance in  Fig. \ref{fig:beta_plot}. We noticed a trend that with higher betas the performance of LLMs alignment decreases, thus we set $\beta = 0.01$.  Ablation results comparing hard and soft sorting as well as the variance of AlpacaEval scores across multiple runs in Appendix \ref{app:additional_exp} (Tables \ref{tab:hard_soft_ablation} and \ref{tab:variance}) show the overall robustness of $\mathsf{AOT}$ .  

\section{Conclusion}
We present in this paper Distributional Alignment via Optimal Transport ($\mathsf{AOT}$) for large language models. The $\mathsf{AOT}$ cost can be cast as a one-dimensional optimal transport problem with a smooth and convex cost that penalizes violations of the dominance of the chosen on  rejected marginals. $\mathsf{AOT}$ enjoys  parametric statistical rates. We showed with extensive experimentation on various paired and unpaired datasets, base models, and different loss functions,   that $\mathsf{AOT}$ alignment robustly leads to aligned models that outperform alternative alignment strategies such as DPO, KTO and IPO on the Alpaca Benchmark, leading to one of  the best 7B models on that benchmark as of the time of writing. On other benchmarks such as the open LLM leaderboard $\mathsf{AOT}$ leads to competitive results.

\section*{Acknowledgments}
The authors would like to thank Felipe Maia Polo for his help with LLMs benchmarking. 

\bibliographystyle{abbrvnat}
\bibliography{iclr2024_conference.bib}

\begin{thebibliography}{38}
\providecommand{\natexlab}[1]{#1}
\providecommand{\url}[1]{\texttt{#1}}
\expandafter\ifx\csname urlstyle\endcsname\relax
  \providecommand{\doi}[1]{doi: #1}\else
  \providecommand{\doi}{doi: \begingroup \urlstyle{rm}\Url}\fi

\bibitem[AI@Meta(2024)]{llama3modelcard}
AI@Meta.
\newblock Llama 3 model card.
\newblock 2024.
\newblock URL
  \url{https://github.com/meta-llama/llama3/blob/main/MODEL_CARD.md}.

\bibitem[Anil et~al.(2019)Anil, Lucas, and Grosse]{anil2019sorting}
C.~Anil, J.~Lucas, and R.~Grosse.
\newblock Sorting out lipschitz function approximation.
\newblock In \emph{International Conference on Machine Learning}, pages
  291--301. PMLR, 2019.

\bibitem[Azar et~al.(2024)Azar, Guo, Piot, Munos, Rowland, Valko, and
  Calandriello]{azar2024general}
M.~G. Azar, Z.~D. Guo, B.~Piot, R.~Munos, M.~Rowland, M.~Valko, and
  D.~Calandriello.
\newblock A general theoretical paradigm to understand learning from human
  preferences.
\newblock In \emph{International Conference on Artificial Intelligence and
  Statistics}, pages 4447--4455. PMLR, 2024.

\bibitem[Bai et~al.(2022)Bai, Jones, Ndousse, Askell, Chen, DasSarma, Drain,
  Fort, Ganguli, Henighan, et~al.]{bai2022training}
Y.~Bai, A.~Jones, K.~Ndousse, A.~Askell, A.~Chen, N.~DasSarma, D.~Drain,
  S.~Fort, D.~Ganguli, T.~Henighan, et~al.
\newblock Training a helpful and harmless assistant with reinforcement learning
  from human feedback.
\newblock \emph{arXiv preprint arXiv:2204.05862}, 2022.

\bibitem[Bartlett et~al.(2006)Bartlett, Jordan, and
  McAuliffe]{Bartlett2006ConvexityCA}
P.~L. Bartlett, M.~I. Jordan, and J.~D. McAuliffe.
\newblock Convexity, classification, and risk bounds.
\newblock \emph{Journal of the American Statistical Association}, 101:\penalty0
  138 -- 156, 2006.
\newblock URL \url{https://api.semanticscholar.org/CorpusID:2833811}.

\bibitem[Beeching et~al.(2023)Beeching, Fourrier, Habib, Han, Lambert, Rajani,
  Sanseviero, Tunstall, and Wolf]{open-llm-leaderboard}
E.~Beeching, C.~Fourrier, N.~Habib, S.~Han, N.~Lambert, N.~Rajani,
  O.~Sanseviero, L.~Tunstall, and T.~Wolf.
\newblock {Open LLM Leaderboard}.
\newblock
  \url{https://huggingface.co/spaces/HuggingFaceH4/open_llm_leaderboard}, 2023.

\bibitem[Blondel et~al.(2020)Blondel, Teboul, Berthet, and
  Djolonga]{blondel2020fast}
M.~Blondel, O.~Teboul, Q.~Berthet, and J.~Djolonga.
\newblock Fast differentiable sorting and ranking.
\newblock In \emph{International Conference on Machine Learning}, pages
  950--959. PMLR, 2020.

\bibitem[Christiano et~al.(2017)Christiano, Leike, Brown, Martic, Legg, and
  Amodei]{NIPS2017_d5e2c0ad}
P.~F. Christiano, J.~Leike, T.~Brown, M.~Martic, S.~Legg, and D.~Amodei.
\newblock Deep reinforcement learning from human preferences.
\newblock In I.~Guyon, U.~V. Luxburg, S.~Bengio, H.~Wallach, R.~Fergus,
  S.~Vishwanathan, and R.~Garnett, editors, \emph{Advances in Neural
  Information Processing Systems}, volume~30. Curran Associates, Inc., 2017.
\newblock URL
  \url{https://proceedings.neurips.cc/paper_files/paper/2017/file/d5e2c0adad503c91f91df240d0cd4e49-Paper.pdf}.

\bibitem[Cuturi et~al.(2019)Cuturi, Teboul, and Vert]{cuturi2019differentiable}
M.~Cuturi, O.~Teboul, and J.-P. Vert.
\newblock Differentiable ranking and sorting using optimal transport.
\newblock \emph{Advances in neural information processing systems}, 32, 2019.

\bibitem[Del~Barrio et~al.(2018)Del~Barrio, Cuesta-Albertos, and
  Matr{\'a}n]{del2018optimal}
E.~Del~Barrio, J.~A. Cuesta-Albertos, and C.~Matr{\'a}n.
\newblock An optimal transportation approach for assessing almost stochastic
  order.
\newblock \emph{The Mathematics of the Uncertain: A Tribute to Pedro Gil},
  pages 33--44, 2018.

\bibitem[Domingo-Enrich et~al.(2022)Domingo-Enrich, Schiff, and
  Mroueh]{domingo2022learning}
C.~Domingo-Enrich, Y.~Schiff, and Y.~Mroueh.
\newblock Learning with stochastic orders.
\newblock In \emph{The Eleventh International Conference on Learning
  Representations}, 2022.

\bibitem[Dubois et~al.(2024)Dubois, Galambosi, Liang, and
  Hashimoto]{dubois2024length}
Y.~Dubois, B.~Galambosi, P.~Liang, and T.~B. Hashimoto.
\newblock Length-controlled alpacaeval: A simple way to debias automatic
  evaluators.
\newblock \emph{arXiv preprint arXiv:2404.04475}, 2024.

\bibitem[Ethayarajh et~al.(2023)Ethayarajh, Xu, Jurafsky, and
  Kiela]{ethayarajh2023halos}
K.~Ethayarajh, W.~Xu, D.~Jurafsky, and D.~Kiela.
\newblock Human-centered loss functions (halos).
\newblock Technical report, Contextual AI, 2023.

\bibitem[Ethayarajh et~al.(2024)Ethayarajh, Xu, Muennighoff, Jurafsky, and
  Kiela]{ethayarajh2024kto}
K.~Ethayarajh, W.~Xu, N.~Muennighoff, D.~Jurafsky, and D.~Kiela.
\newblock Kto: Model alignment as prospect theoretic optimization.
\newblock \emph{arXiv preprint arXiv:2402.01306}, 2024.

\bibitem[Groppe and Hundrieser(2023)]{groppe2023lower}
M.~Groppe and S.~Hundrieser.
\newblock Lower complexity adaptation for empirical entropic optimal transport.
\newblock \emph{arXiv preprint arXiv:2306.13580}, 2023.

\bibitem[Hu et~al.(2021)Hu, Shen, Wallis, Allen-Zhu, Li, Wang, Wang, and
  Chen]{hu2021lora}
E.~J. Hu, Y.~Shen, P.~Wallis, Z.~Allen-Zhu, Y.~Li, S.~Wang, L.~Wang, and
  W.~Chen.
\newblock {LoRa: Low-rank adaptation of large language models}.
\newblock \emph{arXiv preprint arXiv:2106.09685}, 2021.

\bibitem[Hundrieser et~al.(2022)Hundrieser, Staudt, and
  Munk]{hundrieser2022empirical}
S.~Hundrieser, T.~Staudt, and A.~Munk.
\newblock Empirical optimal transport between different measures adapts to
  lower complexity.
\newblock \emph{arXiv preprint arXiv:2202.10434}, 2022.

\bibitem[Ji et~al.(2023)Ji, Liu, Dai, Pan, Zhang, Bian, Zhang, Sun, Wang, and
  Yang]{beavertails}
J.~Ji, M.~Liu, J.~Dai, X.~Pan, C.~Zhang, C.~Bian, C.~Zhang, R.~Sun, Y.~Wang,
  and Y.~Yang.
\newblock Beavertails: Towards improved safety alignment of llm via a
  human-preference dataset.
\newblock \emph{arXiv preprint arXiv:2307.04657}, 2023.

\bibitem[Jiang et~al.(2023)Jiang, Sablayrolles, Mensch, Bamford, Chaplot,
  de~las Casas, Bressand, Lengyel, Lample, Saulnier, et~al.]{jiang2023mistral}
A.~Jiang, A.~Sablayrolles, A.~Mensch, C.~Bamford, D.~Chaplot, D.~de~las Casas,
  F.~Bressand, G.~Lengyel, G.~Lample, L.~Saulnier, et~al.
\newblock Mistral 7b (2023).
\newblock \emph{arXiv preprint arXiv:2310.06825}, 2023.

\bibitem[Maia~Polo et~al.(2024)Maia~Polo, Weber, Choshen, Sun, Xu, and
  Yurochkin]{polo2024tinybenchmarks}
F.~Maia~Polo, L.~Weber, L.~Choshen, Y.~Sun, G.~Xu, and M.~Yurochkin.
\newblock {tinyBenchmarks: evaluating LLMs with fewer examples}.
\newblock \emph{arXiv preprint arXiv:2402.14992}, 2024.

\bibitem[Manole and Niles-Weed(2024)]{manole2024sharp}
T.~Manole and J.~Niles-Weed.
\newblock Sharp convergence rates for empirical optimal transport with smooth
  costs.
\newblock \emph{The Annals of Applied Probability}, 34\penalty0 (1B):\penalty0
  1108--1135, 2024.

\bibitem[Ogryczak and Ruszczynski(2002)]{ogryczak2002dual}
W.~Ogryczak and A.~Ruszczynski.
\newblock Dual stochastic dominance and related mean-risk models.
\newblock \emph{SIAM Journal on Optimization}, 13\penalty0 (1):\penalty0
  60--78, 2002.

\bibitem[Ouyang et~al.(2022)Ouyang, Wu, Jiang, Almeida, Wainwright, Mishkin,
  Zhang, Agarwal, Slama, Ray, et~al.]{ouyang2022training}
L.~Ouyang, J.~Wu, X.~Jiang, D.~Almeida, C.~Wainwright, P.~Mishkin, C.~Zhang,
  S.~Agarwal, K.~Slama, A.~Ray, et~al.
\newblock Training language models to follow instructions with human feedback.
\newblock \emph{Advances in Neural Information Processing Systems},
  35:\penalty0 27730--27744, 2022.

\bibitem[Peyré and Cuturi(2019)]{peyre2019computational}
G.~Peyré and M.~Cuturi.
\newblock Computational optimal transport.
\newblock \emph{Foundations and Trends in Machine Learning}, 11\penalty0
  (5-6):\penalty0 355--607, 2019.
\newblock ISSN 1935-8237.
\newblock \doi{10.1561/2200000073}.

\bibitem[Rafailov et~al.(2024)Rafailov, Sharma, Mitchell, Manning, Ermon, and
  Finn]{rafailov2024direct}
R.~Rafailov, A.~Sharma, E.~Mitchell, C.~D. Manning, S.~Ermon, and C.~Finn.
\newblock Direct preference optimization: Your language model is secretly a
  reward model.
\newblock \emph{Advances in Neural Information Processing Systems}, 36, 2024.

\bibitem[Rosasco et~al.(2004)Rosasco, De~Vito, Caponnetto, Piana, and
  Verri]{Losses}
L.~Rosasco, E.~De~Vito, A.~Caponnetto, M.~Piana, and A.~Verri.
\newblock Are loss functions all the same?
\newblock \emph{Neural Comput.}, 16\penalty0 (5):\penalty0 1063–1076, may
  2004.
\newblock ISSN 0899-7667.

\bibitem[Santambrogio(2015)]{santambrogio2015otam}
F.~Santambrogio.
\newblock \emph{Optimal Transport for Applied Mathematicians: Calculus of
  Variations, {PDEs}, and Modeling}.
\newblock Birkhäuser, Cham, 2015.
\newblock ISBN 9783319208275.
\newblock \doi{10.1007/978-3-319-20828-2}.

\bibitem[Stiennon et~al.(2020)Stiennon, Ouyang, Wu, Ziegler, Lowe, Voss,
  Radford, Amodei, and Christiano]{stiennon2020learning}
N.~Stiennon, L.~Ouyang, J.~Wu, D.~Ziegler, R.~Lowe, C.~Voss, A.~Radford,
  D.~Amodei, and P.~F. Christiano.
\newblock Learning to summarize with human feedback.
\newblock \emph{Advances in Neural Information Processing Systems},
  33:\penalty0 3008--3021, 2020.

\bibitem[Sudalairaj et~al.(2024)Sudalairaj, Bhandwaldar, Pareja, Xu, Cox, and
  Srivastava]{sudalairaj2024lab}
S.~Sudalairaj, A.~Bhandwaldar, A.~Pareja, K.~Xu, D.~D. Cox, and A.~Srivastava.
\newblock Lab: Large-scale alignment for chatbots.
\newblock \emph{arXiv preprint arXiv:2403.01081}, 2024.

\bibitem[Teknium(2024)]{teknium_openhermes_2024}
Teknium.
\newblock Openhermes-2.5-mistral-7b.
\newblock \url{https://huggingface.co/teknium/OpenHermes-2.5-Mistral-7B}, 2024.

\bibitem[Tunstall et~al.(2023{\natexlab{a}})Tunstall, Beeching, Lambert,
  Rajani, Huang, Rasul, Rush, and Wolf]{alignment_handbook2023}
L.~Tunstall, E.~Beeching, N.~Lambert, N.~Rajani, S.~Huang, K.~Rasul, A.~M.
  Rush, and T.~Wolf.
\newblock The alignment handbook.
\newblock \url{https://github.com/huggingface/alignment-handbook},
  2023{\natexlab{a}}.

\bibitem[Tunstall et~al.(2023{\natexlab{b}})Tunstall, Beeching, Lambert,
  Rajani, Rasul, Belkada, Huang, von Werra, Fourrier, Habib,
  et~al.]{tunstall2023zephyr}
L.~Tunstall, E.~Beeching, N.~Lambert, N.~Rajani, K.~Rasul, Y.~Belkada,
  S.~Huang, L.~von Werra, C.~Fourrier, N.~Habib, et~al.
\newblock {Zephyr: Direct distillation of LM alignment}.
\newblock \emph{arXiv preprint arXiv:2310.16944}, 2023{\natexlab{b}}.

\bibitem[Villani(2009)]{villani2009optimal}
C.~Villani.
\newblock \emph{Optimal Transport: old and new}, volume 338.
\newblock Springer, 2009.

\bibitem[von Werra et~al.(2020)von Werra, Belkada, Tunstall, Beeching, Thrush,
  Lambert, and Huang]{vonwerra2022trl}
L.~von Werra, Y.~Belkada, L.~Tunstall, E.~Beeching, T.~Thrush, N.~Lambert, and
  S.~Huang.
\newblock Trl: Transformer reinforcement learning.
\newblock \url{https://github.com/huggingface/trl}, 2020.

\bibitem[Wang et~al.(2023)Wang, Dong, Zeng, Adams, Sreedhar, Egert, Delalleau,
  Scowcroft, Kant, Swope, et~al.]{wang2023helpsteer}
Z.~Wang, Y.~Dong, J.~Zeng, V.~Adams, M.~N. Sreedhar, D.~Egert, O.~Delalleau,
  J.~P. Scowcroft, N.~Kant, A.~Swope, et~al.
\newblock Helpsteer: Multi-attribute helpfulness dataset for steerlm.
\newblock \emph{arXiv preprint arXiv:2311.09528}, 2023.

\bibitem[Yuan et~al.(2024)Yuan, Pang, Cho, Li, Sukhbaatar, Xu, and
  Weston]{yuan2024selfrewarding}
W.~Yuan, R.~Y. Pang, K.~Cho, X.~Li, S.~Sukhbaatar, J.~Xu, and J.~Weston.
\newblock Self-rewarding language models, 2024.

\bibitem[Zhao et~al.(2023)Zhao, Khalman, Joshi, Narayan, Saleh, and
  Liu]{zhao2023calibrating}
Y.~Zhao, M.~Khalman, R.~Joshi, S.~Narayan, M.~Saleh, and P.~J. Liu.
\newblock Calibrating sequence likelihood improves conditional language
  generation.
\newblock In \emph{The Eleventh International Conference on Learning
  Representations}, 2023.
\newblock URL \url{https://openreview.net/forum?id=0qSOodKmJaN}.

\bibitem[Zhu et~al.(2023)Zhu, Frick, Wu, Zhu, and Jiao]{starling2023}
B.~Zhu, E.~Frick, T.~Wu, H.~Zhu, and J.~Jiao.
\newblock Starling-7b: Improving llm helpfulness and harmlessness with rlaif,
  November 2023.

\end{thebibliography}

\appendix
\newpage
\section{Broader Impact and Limitations}

In this paper, we introduced an alignment method for LLMs that can capture rewards at the distributional level without the requirement of paired preference data.
Our algorithm was derived by imposing the stochastic dominance of positive reward distribution over negative distributions through an Optimal Transport formulation. It enjoys a very simple algorithmic implementation in terms of a closed-form expression via sorting on empirical measures.
Empirically, our algorithm demonstrates excellent results by allowing us to train 7B parameter models that achieve state-of-the-art evaluation results on Open LLM Benchmarks and AlpacaEval.

In terms of broader societal impact, we would like to highlight the benefits that our AOT algorithm will bring to RLHF by enabling a more robust distributional alignment of LLMs, improving their ability to follow instructions accurately, and aligning their responses with human values. 

Our work shares the same limitations and possible negative broader societal impacts as the majority of RLHF work. The algorithm is fundamentally limited by the training dataset used for alignment and might, therefore, contribute to amplifying various types of bias present in the data.
In addition, alignment through AOT is not enough to address aspects related to the security and safety of LLM deployment.
In general, better performance on a given set of benchmarks following alignment does not imply better performance across the board in other tasks, and ad-hoc evaluation specific to each task of interest is warranted.

\section{Algorithms and Pytorch Code In Hugging Face TRL  }\label{app:alg}
\vskip -0.2in
\begin{minipage}{0.46\textwidth}
\begin{algorithm}[H]
    \centering
    \caption{$\mathsf{AOT}$ Unpaired }\label{algorithm}
    \begin{algorithmic}[1]
        \State  \textbf{Input:} $\pi_{\theta}$, $\pi_{\mathrm{ref}},\beta >0,\varepsilon>0$,
        \State  \textbf{Unpaired Preference Data: } ``Chosen'' $\hat{\mu}^n_+=\frac{1}{n} \sum_{i=1}^n \delta_{(x_{i,+}, y_{i,+})}$
         and ``Rejected''\\ $\hat{\mu}^n_-=\frac{1}{n} \sum_{i=1}^n \delta_{(x_{i,-}, y_{i,-})}$. 
        \For{$\mathrm{iter} \gets 1, n_{\mathrm{iter}}$}
       \State $\textbf{Get a Positive/Negative mini-batch}$
        \State $\{ (x_{i,+}, y_{i,+}) \sim  \hat{\mu}^n_+, i=1\dots b  \}$
         \State $\{ (x_{i,-}, y_{i,-}) \sim  \hat{\mu}^n_-, i=1\dots b  \}$
        \State $\textbf{Compute Rewards for } i=1\dots b$ 
	\State $ u^i_{\theta} =\log \frac{\pi_{\theta}(y_{i,+}|x_{i,+})}{\pi_{\mathrm{ref}}(y_{i,+}|x_{i,+})} $
 	\State $ v^i_{\theta} =\log \frac{\pi_{\theta}(y_{i,-}|x_{i,-})}{\pi_{\mathrm{ref}}(y_{i,-}|x_{i,-})} $
         \State $\textbf{Sort Rewards}$ 
         \If {$\text{hard sort}$ }
         \State $ ( u^{(1)}\dots u^{(b)} ) = \mathsf{Sort}( u^i_{\theta} )$
          \State $ ( v^{(1)}\dots v^{(b)} ) = \mathsf{Sort}( v^i_{\theta} )$
        \ElsIf {$\text{soft sort}$}
         \State $ ( u^{(1)}\dots u^{(b)} ) = \mathsf{SoftSort}( u^i_{\theta},\varepsilon )$
          \State $ ( v^{(1)}\dots v^{(b)} ) = \mathsf{SoftSort}( v^i_{\theta},\varepsilon )$
         \EndIf
        \State $\textbf{Compute AOT logistic loss}$ 
         \State $\ell_\theta = -   \frac{1}{b} \sum_{i=1}^b  \log \sigmoid ( \beta (  u^{(i)}_{\theta} - v^{(i)}_{\theta})) $
        \State \textbf{Update $\theta$}
        \State $\theta \gets \mathsf{PagedAdamw32bit}(\nabla_{\theta} \ell(\theta))  $
        \EndFor
             \State \textbf{Return}  $\pi_{\theta}$
    \end{algorithmic}
    \label{Alg:AOT_unpaired}
\end{algorithm}
\end{minipage}
\hfill 
\begin{minipage}{0.46\textwidth}
\begin{algorithm}[H]
    \centering
    \caption{$\mathsf{AOT}$ Paired}\label{algorithm1}
    \begin{algorithmic}[1]
      \State  \textbf{Input:} $\pi_{\theta}$, $\pi_{\mathrm{ref}},\beta >0,\varepsilon>0$,
        \State  \textbf{Paired Preference Data: }  $\hat{\mu}^{n} =\frac{1}{n}\sum_{i=1}^n \delta_{(x_i,y_{i,+},y_{i,-})}$
        \For{$\mathrm{iter} \gets 1, n_{\mathrm{iter}}$}
       \State $\textbf{Get a Positive/Negative mini-batch}$
        \State $\{ (x_{i}, y_{i,+},y_{i,-}) \sim  \hat{\mu}^n, i=1\dots b  \}$
        \State $\textbf{Compute Margins for } i=1\dots b$ 
	\State $ u^i_{\theta} = \log \frac { \pi_{\theta}(y_{i,+}|x_i)} {\pi_{\theta}(y_{i,-}|x_i)} $
 	\State $v^i_{\theta} =  \log \frac{ \pi_{\mathrm{ref}}(y_{i,+}|x_i)} {\pi_{\mathrm{ref}}(y_{i,-}|x_i)} $
         \State $\textbf{Sort Margins}$ 
         \If {$\text{hard sort}$ }
         \State $ ( u^{(1)}\dots u^{(b)} ) = \mathsf{Sort}( u^i_{\theta} )$
          \State $ ( v^{(1)}\dots v^{(b)} ) = \mathsf{Sort}( v^i_{\theta} )$
        \ElsIf {$\text{soft sort}$}
         \State $ ( u^{(1)}\dots u^{(b)} ) = \mathsf{SoftSort}( u^i_{\theta},\varepsilon )$
          \State $ ( v^{(1)}\dots v^{(b)} ) = \mathsf{SoftSort}( v^i_{\theta},\varepsilon )$
         \EndIf
        \State $\textbf{Compute AOT logistic loss}$ 
         \State $\ell_\theta = -   \frac{1}{b} \sum_{i=1}^b  \log \sigmoid ( \beta (  u^{(i)}_{\theta} - v^{(i)}_{\theta})) $
        \State \textbf{Update $\theta$}
        \State $\theta \gets \mathsf{PagedAdamw32bit}(\nabla_{\theta} \ell(\theta))  $
        \EndFor
             \State \textbf{Return}  $\pi_{\theta}$
      \end{algorithmic}
\label{Alg:AOT_paired}
\end{algorithm}
\end{minipage}
~~\\
\newpage
\begin{lstlisting}[language=python,breaklines]
import torch
import torchsort
\end{lstlisting}

\begin{lstlisting}[language=python,breaklines]
def dpo_loss( ...
...
elif self.loss_type == "AOT_unpair":
	chosen_logratios = policy_chosen_logps - reference_chosen_logps
	rejected_logratios = policy_rejected_logps - reference_rejected_logps
	if self.sort_type == "hard_sort": 
		chosen_logratios_sorted, _ = torch.sort(chosen_logratios, dim=0)
		rejected_logratios_sorted, _ = torch.sort(rejected_logratios, dim=0) 
	elif self.sort_type == "soft_sort": 
		chosen_logratios_sorted = torchsort.soft_sort(chosen_logratios, regularization_strength=0.1)
                 rejected_logratios_sorted = torchsort.soft_sort(rejected_logratios, regularization_strength=0.1)
        	delta_sorted = chosen_logratios_sorted - rejected_logratios_sorted
         if self.AOT_loss == "hinge":
		losses = torch.relu(self.beta - delta_sorted)**2
	elif self.AOT_loss == "logistic":
		losses = (
                -F.logsigmoid(self.beta *delta_sorted) * (1 - self.label_smoothing)
                - F.logsigmoid(-self.beta * delta_sorted) * self.label_smoothing
                )
\end{lstlisting}

\begin{lstlisting}[language=python,breaklines]
elif self.loss_type == "AOT_pair":
	 pi_logratios = policy_chosen_logps - policy_rejected_logps
	 ref_logratios = reference_chosen_logps - reference_rejected_logps 
	if self.sort_type == "hard_sort": 
		pi_logratios_sorted, _ =  torch.sort(pi_logratios, dim=0)
		ref_logratios_sorted, _ =  torch.sort(ref_logratios, dim=0) 
	elif self.sort_type == "soft_sort": 
		pi_logratios_sorted = torchsort.soft_sort(pi_logratios, regularization_strength=0.1)
                 ref_logratios_sorted = torchsort.soft_sort(ref_logratios, regularization_strength=0.1)
        	delta_sorted = pi_logratios_sorted - ref_logratios_sorted
         if self.AOT_loss == "hinge":
		losses = torch.relu(self.beta - delta_sorted)**2
	elif self.AOT_loss == "logistic":
		losses = (
                -F.logsigmoid(self.beta *delta_sorted) * (1 - self.label_smoothing)
                - F.logsigmoid(-self.beta * delta_sorted) * self.label_smoothing
                )
\end{lstlisting}
\newpage

\section{Proofs}

\begin{proof}[Proof of Theorem \ref{theo:AOT_unpaired}]

Let 
\[ \varepsilon(\theta,\mu_+,\mu_-) =  \sup_{\varphi \in \mathcal{F}_{c}} \int \varphi( r\circ \pi_{\theta}) d\mu_+ - \int \varphi^{c}( r\circ \pi_{\theta}) d\mu_-  \]
where for $z,z' \in \mathbb{R}$ :
\[\varphi^{c}(z) = \inf_{z'\in [-M,M] } h(z-z')-\varphi(z) \]
and 
(a function  $f$ is $c-$ concave if there exits $g$ such that  $f= g^c$).

The population problem :
\begin{equation}
\min_{\pi_{\theta} \in \mathcal{H}}  \varepsilon(\theta, \mu_+, \mu_-) 
\label{eq:popCost}
\end{equation}

\noindent Given samples $\hat{\mu}^n_+=\frac{1}{n} \sum_{i=1}^n \delta_{(x^i_+, y^i_+)}$ and $\hat{\mu}^m_-=\frac{1}{m} \sum_{i=1}^m \delta_{(x^i_-, y^i_-)}$, the empirical problem is :
\[ \varepsilon(\theta, \hat{\mu}^n_+, \hat{\mu}^m_-) = \sup_{\varphi \in \mathcal{F}_{c}} \int \varphi( r\circ \pi_{\theta}) d \hat{\mu}^n_+- \int \varphi^{c}( r\circ \pi_{\theta}) d\hat{\mu}^m_- \]
and the ERM problem is :
\[ \min_{\pi_{\theta} \in \mathcal{H}}  \varepsilon(\theta, \hat{\mu}^n_+, \hat{\mu}^m_-) \]
Let $\hat{\theta}_{m,n}$ be the minimizer of the ERM we have for any $\theta$, by the definition of the minimizer:

\begin{align*}
\varepsilon(\hat{\theta}_{m,n}, \hat{\mu}^n_+, \hat{\mu}^m_-) &\leq \sup_{\varphi \in \mathcal{F}_{c}} \int \varphi( r\circ \pi_{\theta}) d \hat{\mu}^n_+- \int \varphi^{c}( r\circ \pi_{\theta}) d\hat{\mu}^m_-  \nonumber \\
&\leq \sup_{  \varphi \in \mathcal{F}_{c} }   \int \varphi( r\circ \pi_{\theta}) d\mu_+ - \int \varphi^{c}( r\circ \pi_{\theta}) d\mu_- \\
& + \sup_{  \varphi \in \mathcal{F}_{c} }  \int \varphi( r\circ \pi_{\theta}) d (\hat{\mu}^n_+  -\mu_+)\\
&+ \sup_{  \varphi \in \mathcal{F}_{c} }  \int \varphi^{c} ( r\circ \pi_{\theta}) d (\mu_- -   \hat{\mu}^m_-)\\
\end{align*}
Let $\theta^*$ be the minimizer of \eqref{eq:popCost} for $\theta=\theta^*$ in the above inequality, and taking expectations on the randomness of the samples we obtain
\begin{align*}
\mathbb{E} \varepsilon(\hat{\theta}_{m,n}, \hat{\mu}^n_+, \hat{\mu}^m_-) &\leq \mathbb{E} \varepsilon( \theta^*, \mu_+.\mu_-)   
 + \mathbb{E} \sup_{  \varphi \in \mathcal{F}_{c} }  \int \varphi( r\circ\pi_{\theta^*}) d (\hat{\mu}^n_+  -\mu_+)\\
&+\mathbb{E} \sup_{  \varphi \in \mathcal{F}_{c} }  \int \varphi^{c} ( r\circ\pi_{\theta^*}) d (\mu_- -   \hat{\mu}^m_-)
\end{align*}
On the other hand by symmetrization we have:
\begin{align}
\mathbb{E} \sup_{  \varphi \in \mathcal{F}_{c} }  \int \varphi( r\circ\pi_{\theta^*}) d (\hat{\mu}^n_+  -\mu_+) \leq 2 \mathcal{R}_{n}( \mathcal{F}_{c} ) ,   
\end{align}
where $\mathcal{R}_{n} (\mathcal{F}) =\mathbb{E} \sup_{\varphi \in \mathcal{F}} \left| \frac{1}{N}  \sum_{i=1}^N \sigma_i \varphi(X_i)\right|,$
$\sigma_i$ are independent rademacher random variables and $X_i \sim (r\circ\pi_{\theta^*})_{\sharp} \mu_{+} $ iid ($X_i \in \mathbb{R}$). 
and similarly we have:
\begin{align*}
\mathbb{E} \sup_{  \varphi \in \mathcal{F}_{c} }  \int \varphi^{c} ( r\circ\pi_{\theta^*}) d (\mu_- -   \hat{\mu}^m_-)\ \leq \mathbb{E} \sup_{  \varphi^{c} \in \mathcal{F}^{c}_{c}}  \int \varphi^{c} ( r\circ\pi_{\theta^*}) d (\mu_- -   \hat{\mu}^m_-) \leq 2 \mathcal{R}_m(  \mathcal{F}_{c}^{c} )
\end{align*}

\noindent We have finally:
\begin{equation}
\mathbb{E} \varepsilon(\hat{\theta}_{m,n}, \hat{\mu}^n_+, \hat{\mu}^m_-) \leq \varepsilon( \theta^*, \mu_+.\mu_-)   + 2 \mathcal{R}_{n}( \mathcal{F}_{c} ) + 2 \mathcal{R}_m(  \mathcal{F}_{c}^{c}  )
\end{equation}
Turning now to :
\begin{align*}
\varepsilon(\hat{\theta}_{m,n},\mu_+,\mu_-) &= \sup_{\varphi \in \mathcal{F}_{c}} \int \varphi( r\circ \pi_{\hat{\theta}_{m,n}}) d\mu_+ - \int \varphi^{c}( r\circ \pi_{\hat{\theta}_{m,n}}) d\mu_-\\
& \leq \varepsilon(\hat{\theta}_{m,n},\hat{\mu}_+^n,\hat{\mu}_-^m) + \sup_{\varphi \in \mathcal{F}_{c}} \int \varphi( r\circ \pi_{\hat{\theta}_{m,n}}) d ( \mu_+ - \mu_+^n)\\
&+ \sup_{\varphi \in \mathcal{F}_{c}} \int \varphi^c( r\circ \pi_{\hat{\theta}_{m,n}}) d ( \mu^m_ -  - \mu_-)\\
& \leq \varepsilon(\hat{\theta}_{m,n},\hat{\mu}_+^n,\hat{\mu}_-^m) + \sup_{\pi_{\theta } \in \mathcal{H}}\sup_{\varphi \in \mathcal{F}_{c}} \int \varphi( r\circ \pi_{\theta}) d ( \mu_+ - \mu_+^n)\\
&+ \sup_{\pi_{\theta} \in \mathcal{H}}\sup_{\varphi \in \mathcal{F}_{c}} \int \varphi^c( r\circ \pi_{\theta}) d ( \mu^m_ -  - \mu_-)
\end{align*}

Taking expectations we obtain:
\begin{align*}
\mathbb{E} \varepsilon(\hat{\theta}_{m,n},\mu_+,\mu_-) & \leq \mathbb{E}  \varepsilon(\hat{\theta}_{m,n},\hat{\mu}_+^n,\hat{\mu}_-^m) + \mathbb{E}\sup_{\pi_{\theta} \in \mathcal{H}}\sup_{\varphi \in \mathcal{F}_{c}} \int \varphi( r\circ \pi_{{\theta}}) d ( \mu_+ - \mu_+^n)\\
&+ \mathbb{E} \sup_{\pi_{\theta} \in \mathcal{H}}\sup_{\varphi^c \in (\mathcal{F}_{c})^c} \int \varphi^c( r\circ \pi_{\theta}) d ( \mu^m_ -  - \mu_-) \\
& \leq \underbrace{\varepsilon( \theta^*, \mu_+.\mu_-)}_{\text{Optimal FSD Violation}}   + \underbrace{ 2 \mathcal{R}_{n}( \mathcal{F}_{c} ) + 2 \mathcal{R}_m(  \mathcal{F}_{c}^{c})}_{\text{One dimensional OT complexity with optimal $\theta^*$}}  \\
& + \underbrace{2 \mathcal{R}_{n}(\mathcal{F}_c \circ r\circ \mathcal{H})  + 2\mathcal{R}_{m} (\mathcal{F}^c_c \circ r\circ \mathcal{H})}_{\text{Complexity of learning in $\mathcal{H}$ via the  1D OT problem} },
\end{align*}
where 
\begin{equation} 
\mathcal{R}_{n}(\mathcal{F}_c \circ r\circ \mathcal{H})  = \frac{1}{n} \mathbb{E} \sup_{\pi_{\theta} \in \mathcal{H}}\sup_{\varphi \in \mathcal{F}_{c}}\left|  \sum_{i=1}^n \sigma_i \varphi(r \circ \pi_{\theta}(x^+_i,y^+_i) )\right|
\end{equation}
\end{proof}
\section{ Bounding Rademacher Complexities}

\begin{proof}[Proof of Corollary \ref{cor:parametricUnpaired}]
Define the uniform metric entropy of a class of  real valued functions $\mathcal{F}$ on a set $\mathcal{X}$ as the logarithm of the covering number with respect to the uniform norm $\nor{.}_{\infty}$, for $\varepsilon>0$, this is defined as follows:
\begin{align*}
\mathcal{N}(\varepsilon,\mathcal{F},\nor{.}_{\infty}) :=\inf \Big \{ n \in \mathbb{N} \Big | \text{ there exists } f_1\dots f_n: \mathcal{X} \to \mathbb{R} \text{ with } \sup_{f \in \mathcal{F}} \min_{1\leq i \leq n} \nor{f-f_i}_{\infty} \leq \varepsilon   \Big\}
\end{align*}

As observed in \citep{hundrieser2022empirical} the c-transformation with bounded cost is a lipchitz operation under the uniform norm and since $f^{cc}= f$ we have by Lemma 2.1 in Munk :
\begin{align}
\mathcal{N}(\varepsilon, \mathcal{F}_c^c,\nor{.}_{\infty} )= \mathcal{N}(\varepsilon, \mathcal{F}_c,\nor{.}_{\infty} )
\label{eq:ctransform}
\end{align}
Now turning to the Rademacher complexity of a class $\mathcal{F}$, it is dominated by Dudley's entropy integral (Theorem 16 in Luxburg and Bousquet ): 
\begin{align}
\mathcal{R}_{n}(\mathcal{F}) \leq \inf_{\delta \in [0,R]} \left( 2\delta + \sqrt{32} \frac{1}{\sqrt{n}} \int_{\delta/4}^R \sqrt{\log \mathcal{N}(\varepsilon, \mathcal{F},\nor{.}_{\infty} )}d\varepsilon \right)
\label{eq:MetricEntropy}
\end{align}

Note that the cost we are using is  $c(z,z') = h(z-z'), \text{ for } z,z\in [-M,M]$. The domain on which the cost being a closed interval is convex and compact.  By lipchitzity of $h$ (Assumption \ref{ass:OTcost}) and denoting $L$ its lipchitz constant,  $c(,z')$ is lipchitz for all $z' \in [-M,M]$.  Equivalently   $c(,z')$ is $(\alpha, \Lambda)$ Hölder smooth , for $\alpha=1$ and $\Lambda= L$, and hence our setup  falls under the Assumptions of Theorem 3.11 in \citep{hundrieser2022empirical} for Holder smooth costs defined on convex and compact sets and we have:
\[ \log \mathcal{N}(\varepsilon, \mathcal{F}_c,\nor{.}_{\infty} ) \lesssim \varepsilon^{-\frac{d}{\alpha}}. \]
Hence in our case $\alpha=1$ and $d=1$ this leads to 
\[  \log \mathcal{N}(\varepsilon, \mathcal{F}_c,\nor{.}_{\infty} ) \lesssim \varepsilon^{-1}\]
Replacing this in Equation \eqref{eq:MetricEntropy} we obtain:
$\mathcal{R}_{n}(\mathcal{F}_c) \lesssim  n^{-\frac{1}{2}}$
and by Equation \eqref{eq:ctransform} it follows that:
$ \mathcal{R}_{m}(\mathcal{F}^c_c) \lesssim  m^{-\frac{1}{2}}.$

Turning now to the Rademacher Complexity of the composition of the c-concave potentials $\mathcal{F}_c$ with $r\circ \mathcal{H}$ (the composition of a fixed reward function with the hypothesis class $\mathcal{H}$ ). Note since the cost $c(.,z')$ is $L$ Lipchitiz for all $z'\in [-M,M]$ we have $\mathcal{F}_c $  is included in the set of $L$ lipchitz function that are bounded by $R$ (See \citep{hundrieser2022empirical} Lemma A.2 ). For $\varphi \in \mathcal{F}_c$ and $\pi_{\theta} \in \mathcal{H}$ , let us note $h_{\varphi,\pi_{\theta}} = \varphi( r\circ \pi_{\theta})$ we have:
\begin{align*} 
\nor{h_{\varphi,\pi_{\theta}} - h_{\varphi',\pi_{\theta'}}}_{\infty}  = \sup_{x\in \mathcal{X},y\in \mathcal{Y}} |\varphi( r\circ \pi_{\theta}(y|x) ) - \varphi'( r\circ\pi_{\theta'}(y|x)) |
\end{align*} 
We have: 
\begin{align*}
 |\varphi( r\circ \pi_{\theta}(x) ) - \varphi'( r\circ\pi_{\theta'}) | &= |\varphi( r\circ \pi_{\theta}(x) ) - \varphi( r\circ\pi_{\theta'})  + \varphi( r\circ\pi_{\theta'}) - \varphi'( r\circ\pi_{\theta'})          |\\
 &\leq L  \nor{ r\circ \pi_{\theta}-r\circ \pi_{\theta'}}_{\infty} + \nor{\varphi -\varphi'}_{\infty}.
\end{align*}
where we used lipchitzity of $\varphi \in \mathcal{F}_c$ 
and hence we have: 
\[ \nor{h_{\varphi,\pi_{\theta}} - h_{\varphi',\pi_{\theta'}}}_{\infty} \leq  L\nor{ r\circ \pi_{\theta}-r\circ \pi_{\theta'}}_{\infty} + \nor{\varphi -\varphi'}_{\infty}.\]
We have therefore the following bound on the covering number of the composition:
\[ \mathcal{N}(\varepsilon,\mathcal{F}_c \circ r \circ \mathcal{H},\nor{.}_{\infty}) \leq \mathcal{N}\left( \frac{\varepsilon}{2},\mathcal{F}_c,\nor{.}_{\infty}\right)  \mathcal{N}\left( \frac{\varepsilon}{2L},r\circ\mathcal{H},\nor{.}_{\infty}\right) ,\]
Plugging this in Equation \eqref{eq:MetricEntropy} we obtain:

\begin{align*}
\mathcal{R}_{n}(\mathcal{F}_c \circ r \circ \mathcal{H}) \leq \inf_{\delta \in [0,R]} \left( 2\delta + \sqrt{32} \frac{1}{\sqrt{n}} \int_{\delta/4}^R \sqrt{\log \mathcal{N}(\varepsilon/2, \mathcal{F}_c ,\nor{.}_{\infty} ) + \log   \mathcal{N}\left( \frac{\varepsilon}{2L}, r\circ \mathcal{H},\nor{.}_{\infty}\right) }d\varepsilon \right)
\end{align*}
Note that for $a,b>0$ we have $\sqrt{a+b} \leq \sqrt{a}+ \sqrt{b}$, hence we have:
 \begin{align*}
\mathcal{R}_{n}(\mathcal{F}_c \circ r \circ \mathcal{H}) \leq \inf_{\delta \in [0,R]} \left( 2\delta + \sqrt{32} \frac{1}{\sqrt{n}} \int_{\delta/4}^R \sqrt{\log \mathcal{N}(\varepsilon/2, \mathcal{F}_c ,\nor{.}_{\infty} )} +\sqrt{ \log   \mathcal{N}\left( \frac{\varepsilon}{2L},r\circ \mathcal{H},\nor{.}_{\infty}\right) }d\varepsilon \right)
\end{align*}
We know by lipchitizty of the cost and being in one dimension that :
 \[  \log \mathcal{N}(\varepsilon, \mathcal{F}_c,\nor{.}_{\infty} ) \lesssim \varepsilon^{-1}\]
By lipchitizity of $r\circ\pi_{\theta}$ and using Assumption \ref{ass:hypothesis} we have therefore:
\[  \log   \mathcal{N}\left( \frac{\varepsilon}{2L},r\circ \mathcal{H},\nor{.}_{\infty}\right)  \leq  \log   \mathcal{N}\left( \frac{\varepsilon}{2L L'},B_2(r_0,d_{\theta}),\nor{.}_{\infty}\right) \leq  d_{\theta} \log \frac{2r_0 L' L }{\varepsilon}   \]

We have therefore:
\[ \inf_{\delta \in [0,R]}  2\delta +4\frac{\sqrt{2}}{\sqrt{n}} \int_{\delta/4}^R   K_1 \left(\frac{\varepsilon}{2}\right)^{-1/2} d\varepsilon + 4\frac{\sqrt{2}}{\sqrt{n}} \int_{\delta/4}^R \sqrt{  d_{\theta} \log \frac{2r_0 LL' }{\varepsilon}} d\varepsilon   \lesssim  n^{-\frac{1}{2}}. \]
(For $\delta= 0$, the upper bound is obtained.)

For 2) By assumption \ref{ass:existence}, there exists $\pi_{\theta^*}$, such that we have for $h=(-x)^2_+$ : $\mathsf{OT}_{h} \left( (r\circ \pi_{\theta^*}  )_{\sharp} \mu_+ , (r\circ \pi_{\theta^*}  )_{\sharp} \mu_- \right)=0.$
\end{proof}

\section{AOT paired}\label{app:aotpaired}
Similarly following the relaxation approach described in Section \ref{sec:Relax} and using the dual representation of the OT problem we can relax the paired  stochastic dominance constraint problem given in \eqref{eq:FSDPaired}  to:
\begin{equation}
\min_{ \pi_{\theta} \in \mathcal{H}}   \sup_{\varphi \in \mathcal{F}_{c}} \int \varphi( r\circ \pi_{\theta}) d\mu - \int \varphi^{c}( r\circ \pi_{\mathrm{ref}}) d\mu,  
\label{eq:aotpopulationpaired}
\end{equation}
where $\mu$ is the paired measure representing $(X,Y_+,Y_-)$. let $\pi_{\theta^*}$ be the minimizer of \eqref{eq:aotpopulationpaired}. Considering Problem \eqref{eq:aotpopulationpaired}, with empirical samples $\hat{\mu}^{n} =\frac{1}{n}\sum_{i=1}^n \delta_{(x_i,y_{i,+},y_{i,-})}$ , denote $\pi_{\hat{\theta}_n}$ its minimizer we have :
\begin{theorem} [Sample Complexity of Dominance Violation for $\mathsf{AOT}$ Paired] The following sample complexity bound for the violation of stochastic dominance  in $\mathsf{AOT}$ paired holds:
\begin{align*}
\mathbb{E} ~\mathsf{OT}_{h} \left( (r\circ \pi_{\hat{\theta}_n}  )_{\sharp} \mu , (r\circ \pi_{\mathrm{ref}}  )_{\sharp} \mu \right)
& \leq \underbrace{\mathsf{OT}_{h} \left( (r\circ \pi_{\theta^*}  )_{\sharp} \mu , (r\circ \pi_{\mathrm{ref}}  )_{\sharp} \mu \right)}_{\text{Optimal Almost FSD Violation}}\\
 & + \underbrace{ 2 \mathcal{R}_{n}( \mathcal{F}_{c}; (r\circ \pi_{\theta^*}  )_{\sharp} \mu_+ ) + 4 \mathcal{R}_n(  \mathcal{F}_{c}^{c}; (r\circ \pi_{\mathrm{ref}}  )_{\sharp} \mu) }_{\text{One dimensional OT sample complexity with optimal $\theta^*$}}\\
&  + \underbrace{2 \mathcal{R}_{n}(\mathcal{F}_c \circ r\circ \mathcal{H}; \mu)  )}_{\text{Complexity of learning in $\mathcal{H}$ via the  1D OT problem} },
\end{align*}
where $\mathcal{R}_{n} (\mathcal{F}; \nu) =\mathbb{E} \sup_{\varphi \in \mathcal{F}} \left| \frac{1}{n}  \sum_{i=1}^n \sigma_i \varphi(Z_i)\right|$ is the Rademacher Complexity and for $i=1\dots n$,
$\sigma_i$ are independent rademacher random variables and $Z_i \sim \nu$ iid. 
\label{theo:PairedAOT}
\end{theorem}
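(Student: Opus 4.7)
The plan is to mirror the proof of Theorem \ref{theo:AOT_unpaired} almost verbatim, exploiting the key structural simplification that in the paired setting the reference policy $\pi_{\mathrm{ref}}$ is fixed and the chosen/rejected samples share the same underlying paired measure $\mu$. Write $\varepsilon(\theta,\nu):=\sup_{\varphi\in\mathcal{F}_c}\int \varphi(r\circ\pi_\theta)\,d\nu-\int \varphi^c(r\circ\pi_{\mathrm{ref}})\,d\nu$, which by duality (Theorem 5.10 in Villani) equals $\mathsf{OT}_h((r\circ\pi_\theta)_\sharp\nu,(r\circ\pi_{\mathrm{ref}})_\sharp\nu)$. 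The goal is to bound $\mathbb{E}\,\varepsilon(\hat\theta_n,\mu)$ where $\hat\theta_n$ minimizes $\varepsilon(\cdot,\hat\mu^n)$.

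First I would use the optimality of $\hat\theta_n$ on the empirical problem against the population minimizer $\theta^*$: for any $\theta$, $\varepsilon(\hat\theta_n,\hat\mu^n)\le\varepsilon(\theta,\hat\mu^n)$. Plugging in $\theta=\theta^*$ and adding/subtracting $\varepsilon(\theta^*,\mu)$, a triangle-inequality decomposition gives
\begin{align*}
\varepsilon(\hat\theta_n,\hat\mu^n)\le \varepsilon(\theta^*,\mu)
&+\sup_{\varphi\in\mathcal{F}_c}\int\varphi(r\circ\pi_{\theta^*})\,d(\hat\mu^n-\mu)\\
&+\sup_{\varphi\in\mathcal{F}_c}\int\varphi^c(r\circ\pi_{\mathrm{ref}})\,d(\mu-\hat\mu^n).
\end{align*}
Taking expectations and applying standard symmetrization, the first supremum becomes $2\mathcal{R}_n(\mathcal{F}_c;(r\circ\pi_{\theta^*})_\sharp\mu)$, and the second, after noting that $\mathcal{F}_c^c\supseteq\{\varphi^c:\varphi\in\mathcal{F}_c\}$, is bounded by $2\mathcal{R}_n(\mathcal{F}_c^c;(r\circ\pi_{\mathrm{ref}})_\sharp\mu)$. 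Crucially, neither supremum ranges over $\mathcal{H}$, because $\theta^*$ and $\pi_{\mathrm{ref}}$ are both fixed here.

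Next, to pass from $\varepsilon(\hat\theta_n,\hat\mu^n)$ to $\varepsilon(\hat\theta_n,\mu)$, I would write
\begin{align*}
\varepsilon(\hat\theta_n,\mu)\le \varepsilon(\hat\theta_n,\hat\mu^n)
&+\sup_{\pi_\theta\in\mathcal{H}}\sup_{\varphi\in\mathcal{F}_c}\int\varphi(r\circ\pi_\theta)\,d(\mu-\hat\mu^n)\\
&+\sup_{\varphi\in\mathcal{F}_c}\int\varphi^c(r\circ\pi_{\mathrm{ref}})\,d(\hat\mu^n-\mu),
\end{align*}
where the sup over $\mathcal{H}$ appears only in the first term because $\hat\theta_n$ depends on the sample; the reference side needs no such sup since $\pi_{\mathrm{ref}}$ is independent of the data. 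Symmetrization bounds the first sup by $2\mathcal{R}_n(\mathcal{F}_c\circ r\circ\mathcal{H};\mu)$ and the second by $2\mathcal{R}_n(\mathcal{F}_c^c;(r\circ\pi_{\mathrm{ref}})_\sharp\mu)$. Adding this to the inequality from the previous paragraph gives exactly the claimed bound, where the factor $4$ in front of $\mathcal{R}_n(\mathcal{F}_c^c;(r\circ\pi_{\mathrm{ref}})_\sharp\mu)$ arises from combining the two reference-side contributions.

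The main obstacle, as in the unpaired case, is simply bookkeeping which terms pick up a supremum over the hypothesis class $\mathcal{H}$. The asymmetry here, reflected in the absence of a composition-class complexity on the rejected side, is the essential simplification of paired data: because $\pi_{\mathrm{ref}}$ does not depend on $\theta$, the rejected marginal is a fixed push-forward, and only the one-dimensional OT complexity of $\mathcal{F}_c^c$ on that fixed distribution enters—whereas the chosen side still requires the composition-class Rademacher term $\mathcal{R}_n(\mathcal{F}_c\circ r\circ\mathcal{H};\mu)$ to control the fluctuation over all policies $\pi_\theta\in\mathcal{H}$. No $c$-transform lemma or new covering-number argument is needed here; the corresponding parametric-rate corollary would follow by reusing the Dudley-entropy and Hölder-smooth-cost bounds already invoked in Corollary \ref{cor:parametricUnpaired}.
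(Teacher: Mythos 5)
Your proposal is correct and is exactly the argument the paper intends: the paper omits this proof, stating it follows by inspecting the proof of the unpaired theorem, and your adaptation reproduces that derivation with the right bookkeeping — in particular, the factor $4$ on $\mathcal{R}_n(\mathcal{F}_c^c;(r\circ\pi_{\mathrm{ref}})_\sharp\mu)$ from the two reference-side fluctuation terms and the absence of a composition-class term on the reference side because $\pi_{\mathrm{ref}}$ is data-independent. Your reading of the first Rademacher term as being over $(r\circ\pi_{\theta^*})_\sharp\mu$ rather than the $\mu_+$ appearing in the theorem statement is the sensible one (the latter looks like a typo carried over from the unpaired case).
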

Similarly under  Assumptions \ref{ass:OTcost}, \ref{ass:reward} and \ref{ass:hypothesis} we have: 
\[\mathbb{E} ~\mathsf{OT}_{h} \left( (r\circ \pi_{\hat{\theta}_n}  )_{\sharp} \mu , (r\circ \pi_{\mathrm{ref}}  )_{\sharp} \mu \right)
- \mathsf{OT}_{h} \left( (r\circ \pi_{\theta^*}  )_{\sharp} \mu , (r\circ \pi_{\mathrm{ref}}  )_{\sharp} \mu \right) \lesssim  n^{-\frac{1}{2}}. \]

\begin{proof} [Proof of Theorem \ref{theo:PairedAOT}] The proof can be simply obtained by inspecting the proof of Theorem \ref{Alg:AOT_unpaired}, and we omit it.
\end{proof}

\begin{remark} Although $\mathsf{OT}_{h}$ is one dimensional, an entropic regularization of $\mathsf{OT}_{h}$ has computational advantages as discussed in  Section \ref{sec:Relax}. Results from \citep{groppe2023lower} can be leveraged to obtain sample complexity bounds and we obtain under our assumptions also a parametric rate of $n^{-\frac{1}{2}}$. The main insight in \citep{groppe2023lower}  is in introducing for an entropic regularization parameter $\varepsilon>0$, the smoothed $(c,\varepsilon,\mu)$ transform and replacing the spaces $\mathcal{F}_{c}$ by $\mathcal{F}_{c,\varepsilon}$. In the 1D case, up to constants, these spaces have the same covering numbers. 
\end{remark}

\section{Choice of violation penalty function $h$}
\label{app:h}



Recall we proposed the following three classes of loss functions $h$ in the main text:
\begin{enumerate}
    \item \textbf{Area of violation (``classification'')} Setting $h(x)$ to be the 0-1 loss ($\mathbbm{1}_{x < 0}$) measures the fraction of the interval $[0,1]$ where a violation occurs, paralleling classification losses which count the number of misclassification.
    \item \textbf{Wasserstein-1 violation} Setting $h(x)$ to be the hinge loss $(-x)_+$ reduces to measuring the Wasserstein-1 distance from $U_\theta$ to the nearest distribution that has FSD over $V_\theta$.
    \item \textbf{Wasserstein-2 violation} Setting $h(x)$ to be the squared hinge loss $(-x)_+^2$ reduces to measuring the Wasserstein-2 distance from $U_\theta$ to the nearest distribution that has FSD over $V_\theta$.
\end{enumerate}
Besides measuring different quantities, the optimization-theoretic properties of each are different:
\begin{enumerate}
    \item \textbf{The 0-1 loss} This loss does not penalize the size of the violations, making gradient-based optimization difficult as large violations have no gradient. Additionally, if FSD were not achievable (e.g. a strong teacher policy), not penalizing the size of the violations could result in risky policies. 
    \item \textbf{The hinge loss}, i.e. the Wasserstein-1 violation measure. By its nature, the gradient of small violations is just as large as the gradient of large violations, which may be beneficial for convergence to an FSD result. When FSD is impossible to achieve, this loss will also have the effect of encouraging sparse violations while still penalizing the size of the violations, similar to the L1 norm in the classic Lasso algorithm. \textbf{Smooth relaxations} Smooth relaxations of the hinge, e.g., the logistic loss described in the main text, have a nonzero gradient at zero and continue to have a gradient for small positive values. This has the benefit of encouraging quantiles to continue improving after surpassing those of the reference.
    \item \textbf{The squared hinge loss}, i.e. the Wasserstein-2 violation measure. As a quadratic loss, it no longer prefers sparse violations (c.f. L2 norm regularization). Indeed, the gradient signal vanishes as the violation becomes small, meaning that dense violations are likely and slow to be removed. A potential failure mode would be that the new policy has a small violation, but none of the quantiles outperform the baseline.  \textbf{Relaxation} Introducing a bias $\beta$ as in the main text addresses this issue, ensuring the gradient at 0 is nonzero.
\end{enumerate}

\section{Gradients of the sorting-based objective \eqref{eq:AOT}}
\label{app:Grads}

We here repeat the sorting-based objective \eqref{eq:AOT} that we propose, adding $n$ as a superscript for clarity. 
\begin{align}
\min_{\theta \in \Theta}  \mathsf{OT}_{h}(\hat{\mu}^{(n)}_{U_{\theta}},\hat{\mu}^{(n)}_{V_{\theta}})  = \min_{\theta \in \Theta}  \frac{1}{n} \sum_{i=1}^n h(u^{(i)}_{\theta} -v^{(i)}_{\theta}). 
\label{eq:AOT2}
\end{align}
We use gradient-based optimization approaches in practice to find a minimizing $\theta$. 
We have the following theorem showing the gradients are asymptotically unbiased (and thus friendly to stochastic gradient methods). 
\begin{theorem}
    Let $h' = dh/dt$ be $L$-lipschitz and the gradients of $u_\theta$ and $v_\theta$ with respect to $\theta$ have 1-norm bounded by $M$, i.e. $\|\nabla_\theta u_\theta\|_1 \leq M, \|\nabla_\theta v_\theta\|_1 \leq M$ for all $\theta \in \Theta$. Suppose further that the support of $U_\theta$, $V_{\theta}$ are bounded and their distributions have densities (i.e. are atomless).\footnote{In our alignment setting, this is should not be an issue as our scores are real-valued.} Then the gradient of the objective in \eqref{eq:AOT2} is asymptotically unbiased as $n \rightarrow \infty$.
\end{theorem}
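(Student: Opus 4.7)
The plan is to establish $\mathbb{E}[G_n(\theta)]\to \nabla_\theta\,\mathsf{OT}_h(\mu_{U_\theta},\mu_{V_\theta})$ by obtaining an almost-sure pointwise formula for the empirical gradient via Danskin's theorem, then passing to the limit via Glivenko-Cantelli, and finally identifying the limit with the population gradient via implicit differentiation of the CDF identity $F_{U_\theta}(Q_{U_\theta}(t))=t$.

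First, at any fixed $\theta$ with atomless marginals the order statistics $\{u^{(i)}_\theta\}$ and $\{v^{(i)}_\theta\}$ are almost surely strictly separated, so the sorting permutations $\sigma_n,\tau_n$ are unique and locally constant in $\theta$. Writing $\phi_n(\theta)=\tfrac{1}{n}\sum_i h(u^{(i)}_\theta-v^{(i)}_\theta)$ as the minimum over permutations $\pi$ of $\tfrac{1}{n}\sum_i h(u_\theta(X_i)-v_\theta(X'_{\pi(i)}))$, Danskin's theorem gives differentiability at such $\theta$ with
\[
G_n(\theta)=\tfrac{1}{n}\sum_{i=1}^{n} h'\!\bigl(u^{(i)}_\theta-v^{(i)}_\theta\bigr)\bigl(\nabla_\theta u_\theta(X_{\sigma_n(i)})-\nabla_\theta v_\theta(X'_{\tau_n(i)})\bigr).
\]
The uniform bound $\|G_n(\theta)\|_1\le 2M\|h'\|_\infty$ follows since $h'$ is Lipschitz hence bounded on the compact support of the differences and the per-sample gradients have $1$-norm at most $M$.

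Next, re-indexing the $u$-contribution to $G_n$ by $j=\sigma_n(i)$ and letting $\pi_n(j):=\tau_n(\sigma_n^{-1}(j))$ yields $\tfrac{1}{n}\sum_{j} h'(u_\theta(X_j)-v_\theta(X'_{\pi_n(j)}))\,\nabla_\theta u_\theta(X_j)$. By construction $\pi_n(j)$ assigns to $X_j$ the $X'$-index sharing its rank, so the Glivenko-Cantelli theorem together with continuity of $Q_{V_\theta}$ (atomless by hypothesis) gives $v_\theta(X'_{\pi_n(j)})\to Q_{V_\theta}(F_{U_\theta}(u_\theta(X_j)))$ almost surely, uniformly in $j$. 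Dominated convergence driven by the bound above yields
\[
\mathbb{E}\bigl[u\text{-part of }G_n(\theta)\bigr]\;\longrightarrow\;\mathbb{E}_{X}\bigl[h'\bigl(u_\theta(X)-Q_{V_\theta}(F_{U_\theta}(u_\theta(X)))\bigr)\,\nabla_\theta u_\theta(X)\bigr].
\]

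Finally, setting $T=F_{U_\theta}(u_\theta(X))\sim\mathrm{Unif}[0,1]$ and conditioning on $T$ turns the right-hand limit into $\int_0^1 h'(Q_{U_\theta}(t)-Q_{V_\theta}(t))\,\mathbb{E}[\nabla_\theta u_\theta(X)\mid u_\theta(X)=Q_{U_\theta}(t)]\,dt$. Implicit differentiation of $F_{U_\theta}(Q_{U_\theta}(t))=t$, combined with the distributional identity $\partial_\theta F_{U_\theta}(u)=-f_{U_\theta}(u)\,\mathbb{E}[\nabla_\theta u_\theta(X)\mid u_\theta(X)=u]$, gives $\nabla_\theta Q_{U_\theta}(t)=\mathbb{E}[\nabla_\theta u_\theta(X)\mid u_\theta(X)=Q_{U_\theta}(t)]$, so the limit matches the $u$-part of $\nabla_\theta \int_0^1 h(Q_{U_\theta}(t)-Q_{V_\theta}(t))dt=\nabla_\theta\mathsf{OT}_h(\mu_{U_\theta},\mu_{V_\theta})$ by Theorem~\ref{theo:hQuantile}. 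The $v$-contribution is handled symmetrically. The main obstacle is the rank-matching step: controlling $v_\theta(X'_{\pi_n(j)})\to Q_{V_\theta}(F_{U_\theta}(u_\theta(X_j)))$ uniformly in $j$ requires both atomlessness (so ranks are stable and interiors of quantile levels are well-defined) and Glivenko-Cantelli applied simultaneously to both marginals, and the implicit-differentiation identification of $\nabla_\theta Q_{U_\theta}(t)$ with a conditional expectation is where the density (atomless) hypothesis is essential.
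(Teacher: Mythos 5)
Your skeleton matches the paper's: both proofs replace the empirically matched partner $v^{(i)}_\theta$ of $u^{(i)}_\theta$ by the population monotone transport map $F^{-1}_{V_\theta}\circ F_{U_\theta}$ evaluated at $u^{(i)}_\theta$, observe that the resulting surrogate is a plain i.i.d.\ average (hence has $n$-independent expectation), and then show the discrepancy vanishes. Where you genuinely diverge is in how the discrepancy is controlled. The paper uses the $L$-Lipschitzness of $h'$ plus Cauchy--Schwarz to bound the error by $\frac{L^2M^2}{n}\sum_i\bigl(v^{(i)}_\theta - F^{-1}_{V_\theta}(F_{U_\theta}(u^{(i)}_\theta))\bigr)^2$, recognizes the second family as order statistics of an \emph{independent} second sample from $V_\theta$, so the error is the squared $W_2$ distance between two independent empirical measures of $V_\theta$, and invokes Theorem 3.1 of \cite{del2018optimal}. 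You instead argue pointwise uniform convergence $v_\theta(X'_{\pi_n(j)})\to Q_{V_\theta}(F_{U_\theta}(u_\theta(X_j)))$ via Glivenko--Cantelli and finish with dominated convergence. The paper's route buys robustness: an integrated $L^2$ criterion with an off-the-shelf citation whose hypotheses ($G^{-1}$ continuous on $(0,1)$, bounded support) are stated explicitly.

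The soft spot in your route is precisely the ``uniformly in $j$'' claim. Glivenko--Cantelli gives uniform convergence of $F_{n,U_\theta}$ to $F_{U_\theta}$, but uniform convergence of the empirical quantile function $F^{-1}_{n,V_\theta}$ requires continuity of the limiting quantile function $Q_{V_\theta}$, and neither atomlessness nor the existence of a density guarantees this (a density supported on two disjoint intervals has a discontinuous quantile function, and the empirical quantile does not converge uniformly across the jump). This is exactly the hypothesis the paper's cited theorem makes explicit and which your argument silently assumes. Separately, your final step --- identifying $\lim_n\mathbb{E}[G_n]$ with $\nabla_\theta\mathsf{OT}_h(\mu_{U_\theta},\mu_{V_\theta})$ via implicit differentiation of $F_{U_\theta}(Q_{U_\theta}(t))=t$ and the identity $\partial_\theta F_{U_\theta}(u)=-f_{U_\theta}(u)\,\mathbb{E}[\nabla_\theta u_\theta\mid u_\theta=u]$ --- goes beyond what the paper proves (the paper stops at ``unbiased term plus vanishing term'') and rests on unjustified regularity (differentiability of $Q_{U_\theta}$ in $\theta$, interchange of $\nabla_\theta$ with $\int_0^1$, strict positivity of $f_{U_\theta}$). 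It is a nice strengthening of the statement's meaning but should either be made rigorous under additional assumptions or dropped, since the theorem as stated does not require it.
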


\begin{proof}
Observe that the gradients take the form
\begin{align}
\nabla_\theta \mathsf{OT}_{h}(\hat{\mu}^{(n)}_{U_{\theta}},\hat{\mu}^{(n)}_{V_{\theta}}) &= \nabla_\theta\frac{1}{n} \sum_{i=1}^n h(u^{(i)}_{\theta} -v^{(i)}_{\theta})\\
&= \frac{1}{n} \sum_{i=1}^n {h}'(u^{(i)}_{\theta} -v^{(i)}_{\theta}) \nabla_\theta (u^{(i)}_{\theta} -v^{(i)}_{\theta}).
    \label{eq:Gradients} 
\end{align}
Note that as described in the main text, here, the current gradient is determined by the current state of the sorting of the samples, but it is not necessary to differentiate through the sorting algorithm itself as it changes only discretely.

We wish to understand the convergence of the bias of these gradients as $n \rightarrow \infty$. 

We can write (noting that the $n$ atoms of the empirical  $F_{n,V_\theta}$ are distinct with probability 1)
\begin{align*}
    \frac{1}{n} \sum_{i=1}^n {h}'(u^{(i)}_{\theta} -v^{(i)}_{\theta}) \nabla_\theta (u^{(i)}_{\theta} -v^{(i)}_{\theta}) =& \underbrace{\frac{1}{n} \sum_{i=1}^n {h}'(u^{(i)}_{\theta} -F^{-1}_{n,V_\theta}(F_{n,U_\theta}(u^{(i)}_{\theta}))) \nabla_\theta (u^{(i)}_{\theta})}_I \\&- \underbrace{\frac{1}{n} \sum_{i=1}^n {h}'(F^{-1}_{n,U_\theta}(F_{n,V_\theta}(v^{(i)}_{\theta})) -v^{(i)}_{\theta}) \nabla_\theta (v^{(i)}_{\theta})}_{II}.
\end{align*}
Let's consider the first term, the analysis for the second term is the same. Note that since $h'$ is $L$-Lipschitz
\begin{align}
    \left|{h}'\left(u -F^{-1}_{n,V_\theta}(F_{n,U_\theta}(u))\right) - {h}'\left(u -F^{-1}_{V_\theta}(F_{U_\theta}(u))\right)\right| \leq L \left|F^{-1}_{n,V_\theta}(F_{n,U_\theta}(u)) - F^{-1}_{V_\theta}(F_{U_\theta}(u))\right|.
    \label{eq:lip}
\end{align}
Let 
\[
I' = \frac{1}{n} \sum_{i=1}^n {h}'(u^{(i)}_{\theta} -F^{-1}_{V_\theta}(F_{U_\theta}(u^{(i)}_{\theta}))) \nabla_\theta (u^{(i)}_{\theta}).
\]
Note $I'$ is a simple empirical average and hence unbiased in the sense that $\mathbb{E}[I']$ is constant with $n$. Now by \eqref{eq:lip} we can write
\begin{align}
    \|I - I'\|_1 \leq  \frac{L}{n}\sum_{i=1}^n \left|F^{-1}_{n,V_\theta}(F_{n,U_\theta}(u^{(i)}_{\theta})) - F^{-1}_{V_\theta}(F_{U_\theta}(u^{(i)}_{\theta}))\right| \|\nabla_\theta (u^{(i)}_{\theta})\|_1
\end{align}
We seek to bound this quantity as $n \rightarrow \infty$. By Cauchy-Schwarz, 
\begin{align*}
    \|I - I'\|_1^2 &\leq L^2\left(\frac{1}{n}\sum_{i=1}^n \left(F^{-1}_{n,V_\theta}(F_{n,U_\theta}(u^{(i)}_{\theta})) - F^{-1}_{V_\theta}(F_{U_\theta}(u^{(i)}_{\theta}))\right)^2  \right)\left(\frac{1}{n}\sum_{i=1}^n \|\nabla_\theta (u^{(i)}_{\theta})\|_1^2\right)\\
    &\leq \frac{L^2 M^2}{n}  \sum_{i=1}^n \left(F^{-1}_{n,V_\theta}(F_{n,U_\theta}(u^{(i)}_{\theta})) - F^{-1}_{V_\theta}(F_{U_\theta}(u^{(i)}_{\theta}))\right)^2\\
    &= \frac{L^2 M^2}{n}  \sum_{i=1}^n \left(v^{(i)}_{\theta} - F^{-1}_{V_\theta}(F_{U_\theta}(u^{(i)}_{\theta}))\right)^2\\
\end{align*}
where we've assumed that $\left(\frac{1}{n}\sum_{i=1}^n \|\nabla_\theta (u^{(i)}_{\theta})\|_1^2\right) \leq M^2$. 
Observe that $F^{-1}_{V_\theta}(F_{U_\theta}(\cdot))$ is the optimal transport plan from $U_\theta$ to $V_\theta$ and is monotonic nondecreasing, hence $F^{-1}_{V_\theta}(F_{U_\theta}(u^{(i)}_{\theta}))$ are simply a new set of independently drawn order statistics of $v_\theta$, i.e. $v_{\theta}^{(i,2)}$.

We thus have
\begin{align}
\|I - I'\|_1^2 &\sim \frac{L^2 M^2}{n}  \sum_{i=1}^n \left(v^{(i)}_{\theta} - v^{(i,2)}_{\theta}\right)^2\\
&= L^2M^2 \int \left([F^{(1)}_{n,V_\theta}]^{-1}(t)  -[F^{(2)}_{n,V_\theta}]^{-1}(t) \right)^2 dt
\label{eq:limlim}
\end{align}
where $F^{(i)}_{n,V_\theta}$ are independently realized empirical quantile functions.

Theorem 3.1 of \cite{del2018optimal} with the subsequent Remark 3.2.1 therein applies directly to this regime, with the following restated result:
\begin{theorem}[Special case of Theorem 3.1 in light of Remark 3.2.1 in \cite{del2018optimal}]
\label{thm:delB}
If $F$ and $G$ are CDFs of 1-dimensional distributions with bounded support and $G^{-1}$ is continuous on $(0,1)$, then 
\[
\int_0^1 \left(F_n^{-1} - G_m^{-1}\right)^2 - \int_0^1 \left(F^{-1} - G^{-1}\right)^2 \rightarrow_p 0
\]
as $n,m \rightarrow \infty$.
\end{theorem}

Applying Theorem \ref{thm:delB} to our setting and noting that for us the second integral is zero, we have that if $V_\theta$ has bounded support, 
\[
L^2M^2 \int \left([F^{(1)}_{n,V_\theta}]^{-1}(t)  -[F^{(2)}_{n,V_\theta}]^{-1}(t) \right)^2 dt \rightarrow_p 0
\]
where $\rightarrow_p$ indicates convergence in probability as $n \rightarrow \infty$. This implies that $I - I'$ converges to 0 in probability. The proof for $II$ is identical. Since the gradient \eqref{eq:Gradients} is then a sum of two unbiased terms and two terms that converge in probability to zero, the gradient is asymptotically unbiased.
\end{proof}




\section{Additional Experiments}
\label{app:additional_exp}
Tables \ref{tab:merlinite_ultra_full}, \ref{tab:openhermes}, \ref{tab:starling}, \ref{tab:llama3} and \ref{tab:mistral} are ablations on the reference base model used:\\ (\texttt{Merlinite-7B}, \texttt{OpenHermes-2.5-Mistral-7B}, \texttt{Sarling-LM-7B-alpha}, \texttt{Meta-LLama-3-8B-Instruct}, \texttt{Mistral-7B-Instruct-v0.2}).  In these tables, we report only the AlpacaEval using \texttt{Llama3-70B} as a judge to reduce the costs of evaluations. Note that we observed that while the absolute scoring of  \texttt{Llama3-70B} is different than GPT4, as it can be seen in Table \ref{tab:merlinite_ultra_full}, it preserves the rankings of the models. We see across all these models a better performance of the distributional alignment $\mathsf{AOT}$ on AlpacaEval and a competitive performance on other benchmarks.

\begin{table}[!h]
\resizebox{\textwidth}{!}{%
\begin{tabular}{@{}lcccccccc@{}}
\toprule
 &
  \begin{tabular}[c]{@{}c@{}}AlpacaEval\\ (Llama3-70B)\end{tabular} &
  \begin{tabular}[c]{@{}c@{}}AlpacaEval\\ (GPT4)\end{tabular} &
  ARC &
  Hellaswag &
  MMLU &
  Truthful &
  Winogrande &
  GSM8K \\ \midrule
AOT paired   & 44.3          & 29.9          & 82.5          & 66.1          & 62.9          & 50.8          & 74.4          & 53.1          \\
AOT unpaired & \textbf{48.4} & \textbf{31.3} & 82.5          & \textbf{66.2} & 62.8          & \textbf{51.1} & 74.4          & 51.8          \\
DPO          & 36.8          & 27.4          & \textbf{82.8} & 65.8          & \textbf{63.1} & 50.6          & 74.3          & 52.0          \\
KTO          & 35.7          & 24.9          & 82.7          & 65.4          & 63.0          & 48.7          & \textbf{74.9} & \textbf{53.9} \\
IPO          & 43.1          & 27.7          & 82.4          & 65.1          & 63.0          & 46.5          & 74.0          & 52.3          \\
Merlinite-7B & 28.8          & 17.1          & 81.6          & 63.2          & 62.6          & 42.0          & 73.9          & 45.2          \\
\end{tabular}%
}
\caption{Merlinite-7B trained on UltraFeedback Binarized. Here we present full version of the results, including AlpacaEval using Llama3-70B-instruct as a judge and GPT4 as a judge. The comparison reveals that although Llama3 inflates the scores, the relative order between the two judges remains the same, suggesting the use of a cheaper AlpacaEval alternative for local development.}
\label{tab:merlinite_ultra_full}
\end{table}

\begin{table}[!h]
\resizebox{\textwidth}{!}{%
\begin{tabular}{@{}lccccccc@{}}
\toprule
           & \begin{tabular}[c]{@{}c@{}}AlpacaEval\\ (Llama3-70B)\end{tabular} & ARC  & Hellaswag     & MMLU & Truthful      & Winogrande    & GSM8K \\ \midrule
AOT paired & \textbf{24.4} & 84.1 & \textbf{66.1} & 61.0 & \textbf{50.6} & \textbf{74.9} & 66.6  \\
AOT unpaired & 22.5 & \textbf{84.2} & 66.0 & 61.0          & 50.5 & 74.8 & 65.7          \\
DPO          & 17.9 & 84.1          & 66.0 & 61.0          & 50.4 & 74.4 & \textbf{66.7} \\
KTO          & 12.6 & 83.5          & 64.3 & \textbf{61.1} & 47.2 & 74.4 & 66.3          \\
IPO          & 15.5 & 83.9          & 65.4 & \textbf{61.1} & 49.2 & 74.2 & 66.3          \\
OpenHermes-7B & 5.6  & 83.4          & 63.1 & 60.6          & 44.5 & 74.4 & 63.8          \\ 
\end{tabular}%
}
\caption{OpenHermes-2.5-Mistral-7B trained on UltraFeedback Binarized}
\label{tab:openhermes}
\end{table}

\begin{table}[]
\resizebox{\textwidth}{!}{%
\begin{tabular}{@{}lccccccc@{}}
\toprule
             & \begin{tabular}[c]{@{}c@{}}AlpacaEval\\ (Llama3-70B)\end{tabular} & ARC           & Hellaswag     & MMLU          & Truthful      & Winogrande & GSM8K \\ \midrule
AOT paired   & 30.4 & 84.3 & 66.9 & 61.4 & 45.5 & 72.6          & 69.0          \\
AOT unpaired & \textbf{34.4} & \textbf{85.1} & \textbf{67.4} & \textbf{61.5} & \textbf{47.0} & 72.3       & 68.5  \\
DPO          & 28.6 & 84.5 & 66.7 & 61.4 & 45.3 & 72.5          & 69.8          \\
KTO          & 27.2 & 84.8 & 67.0 & 61.4 & 46.2 & \textbf{74.2} & \textbf{70.2} \\
IPO          & 28.6 & 84.5 & 66.7 & 61.4 & 44.4 & 72.9 & 69.8 \\
Starling-7B & 14.3 & 83.4 & 64.4 & 60.9 & 39.4 & 72.5          & 66.6          \\
\end{tabular}%
}
\caption{Starling-LM-7B-alpha trained on UltraFeedback Binarized}
\label{tab:starling}
\end{table}

\begin{table}[!ht]
\resizebox{\textwidth}{!}{%
\begin{tabular}{@{}lccccccc@{}}
\toprule
             & \begin{tabular}[c]{@{}c@{}}AlpacaEval\\ (Llama3-70B)\end{tabular} & ARC           & Hellaswag     & MMLU          & Truthful      & Winogrande & GSM8K \\ \midrule
AOT paired   & 33.6          & 81.7          & 59.4          & \textbf{64.1} & 47.7          & 72.6          & \textbf{78.0} \\
AOT unpaired & \textbf{35.8} & 81.8          & 59.4          & 64.0          & \textbf{47.8} & 72.8          & 77.5          \\
DPO          & 33.1          & \textbf{82.1} & \textbf{59.5} & 64.0          & 47.3          & 73.1          & 77.9          \\
KTO          & 28.5          & 81.9          & 59.0          & 63.9          & 46.5          & \textbf{73.4} & 77.7          \\
IPO          & 33.2          & 81.9          & 59.1          & 63.9          & 46.7          & 72.9        & 77.7          \\
Llama3-8B & 25.4          & 81.6          & 57.7          & 63.8          & 43.9          & 72.5          & 75.9          \\
\end{tabular}%
}
\caption{Meta-Llama-3-8B-Instruct trained on UltraFeedback Binarized}
\label{tab:llama3}
\end{table}

\begin{table}[!ht]
\resizebox{\textwidth}{!}{%
\begin{tabular}{@{}lccccccc@{}}
\toprule
             & \begin{tabular}[c]{@{}c@{}}AlpacaEval\\ (Llama3-70B)\end{tabular} & ARC           & Hellaswag     & MMLU          & Truthful      & Winogrande & GSM8K \\ \midrule
AOT paired   & 32.8 & 81.6          & 67.6          & \textbf{58.9} & 62.7 & 74.3 & 41.9          \\
AOT unpaired & \textbf{34.3}                                                     & 81.7 & 67.6      & 59.1 & \textbf{63.0} & \textbf{74.4} & 41.9  \\
DPO          & 28.8 & 81.7          & 67.6          & 58.8          & 62.3 & 74.2 & \textbf{42.0} \\
KTO          & 27.4 & \textbf{81.9} & \textbf{67.7} & 58.8          & 62.5 & 74.3 & 41.7          \\
IPO          & 28.4 & \textbf{81.9} & 67.1 & 58.8  & 60.5          & 74.0 & 41.9          \\
Mistral-7B & 25.6 & 81.3          & 66.0          & 58.8          & 59.7 & 74.1 & 41.7          \\
\end{tabular}%
}
\caption{Mistral-7B-Instruct-v0.2 trained on UltraFeedback Binarized}
\label{tab:mistral}
\end{table}

Table \ref{tab:hard_soft_ablation}  is an ablation on the sorting that is used in $\mathsf{AOT}$ with \texttt{Merlinite-7B} as a reference model. We see that hard and soft sorting are on par in terms of overall performance.

\begin{table}[!ht]
\resizebox{\textwidth}{!}{%
\begin{tabular}{@{}lcccccccc@{}}
\toprule
 &
  \begin{tabular}[c]{@{}c@{}}Sort\\ Type\end{tabular} &
  \begin{tabular}[c]{@{}c@{}}AlpacaEval\\ (Llama3-70B)\end{tabular} &
  ARC &
  Hellaswag &
  MMLU &
  Truthful &
  Winogrande &
  GSM8K \\ \midrule
\multirow{2}{*}{AOT paired}   & Soft & 44.3 & 82.5 & 66.1 & 62.9 & 50.8 & 74.4 & 53.1 \\
                              & Hard & 43.8 & 82.7 & 66.2 & 62.9 & 50.7 & 74.5 & 53.9 \\ \midrule
\multirow{2}{*}{AOT unpaired} & Soft & 48.4 & 82.5 & 66.2 & 62.8 & 51.1 & 74.4 & 51.8 \\
                              & Hard & 49.2 & 82.5 & 65.9 & 62.8 & 51.0 & 74.4 & 51.0 \\
\end{tabular}%
}
\caption{The effect of sort type on performance in AOT alignment}
\label{tab:hard_soft_ablation}
\end{table}

Tables \ref{fig:merlinite_helpsteer} and \ref{fig:merlinite_pku} give a comparison between $\mathsf{AOT}$ and KTO on unpaired datasets (HelpSteer and PKU binarized) we see that overall $\mathsf{AOT}$ leads to a better performance than KTO. 
\begin{table}[!h]
\centering
\resizebox{0.8\textwidth}{!}{%
\begin{tabular}{@{}lcccccc@{}}
\toprule
 &
  ARC &
  Hellaswag &
  MMLU &
  Truthful &
  Winogrande &
  GSM8K \\ \midrule
AOT unpaired   & \textbf{82.0} & \textbf{63.5} & \textbf{62.9} & \textbf{45.6} & \textbf{74.9} & 48.0          \\
KTO            & 81.9          & 63.5          & 62.7          & 45.0          & 74.2          & \textbf{48.5} \\
Merlinite-7B   & 81.6          & 63.2          & 62.6          & 42.0          & 73.9          & 45.2          \\
\end{tabular}%
}
\caption{Merlinite-7B trained on unpaired HelpSteer (binarized)}
\label{fig:merlinite_helpsteer}
\end{table}

\begin{table}[!h]
\centering
\resizebox{0.8\textwidth}{!}{%
\begin{tabular}{@{}lcccccc@{}}
\toprule
 &
  ARC &
  Hellaswag &
  MMLU &
  Truthful &
  Winogrande &
  GSM8K \\ \midrule
AOT unpaired   & 82.0          & \textbf{64.1} & \textbf{63.0}          & \textbf{56.3} & \textbf{74.6} & 49.7          \\
KTO            & \textbf{82.1} & 63.5          & 62.9          & 43.5          & 74.5          & \textbf{50.4} \\
Merlinite-7B   & 81.6          & 63.2          & 62.6          & 42.0          & 73.9          & 45.2          \\
\end{tabular}%
}
\caption{Merlinite-7B trained on unpaired PKU (binarized)}
\label{fig:merlinite_pku}
\end{table}

Finally Figure \ref{fig:alpaca_eval_plot} puts in context our best model \texttt{Merlinite-7B-uAOT} as the best 7B-family model on AlpacaEval leaderboard at the time of writing this paper. Finally, we give in Table \ref{tab:variance} the variance of the evaluation across 4 different random seeds for training and evaluation each alignment strategy, we see very small variance in $\mathsf{AOT}$, especially the unpaired variant. 

\begin{figure}[!h]
    \centering
    \includegraphics[width=0.7\linewidth]{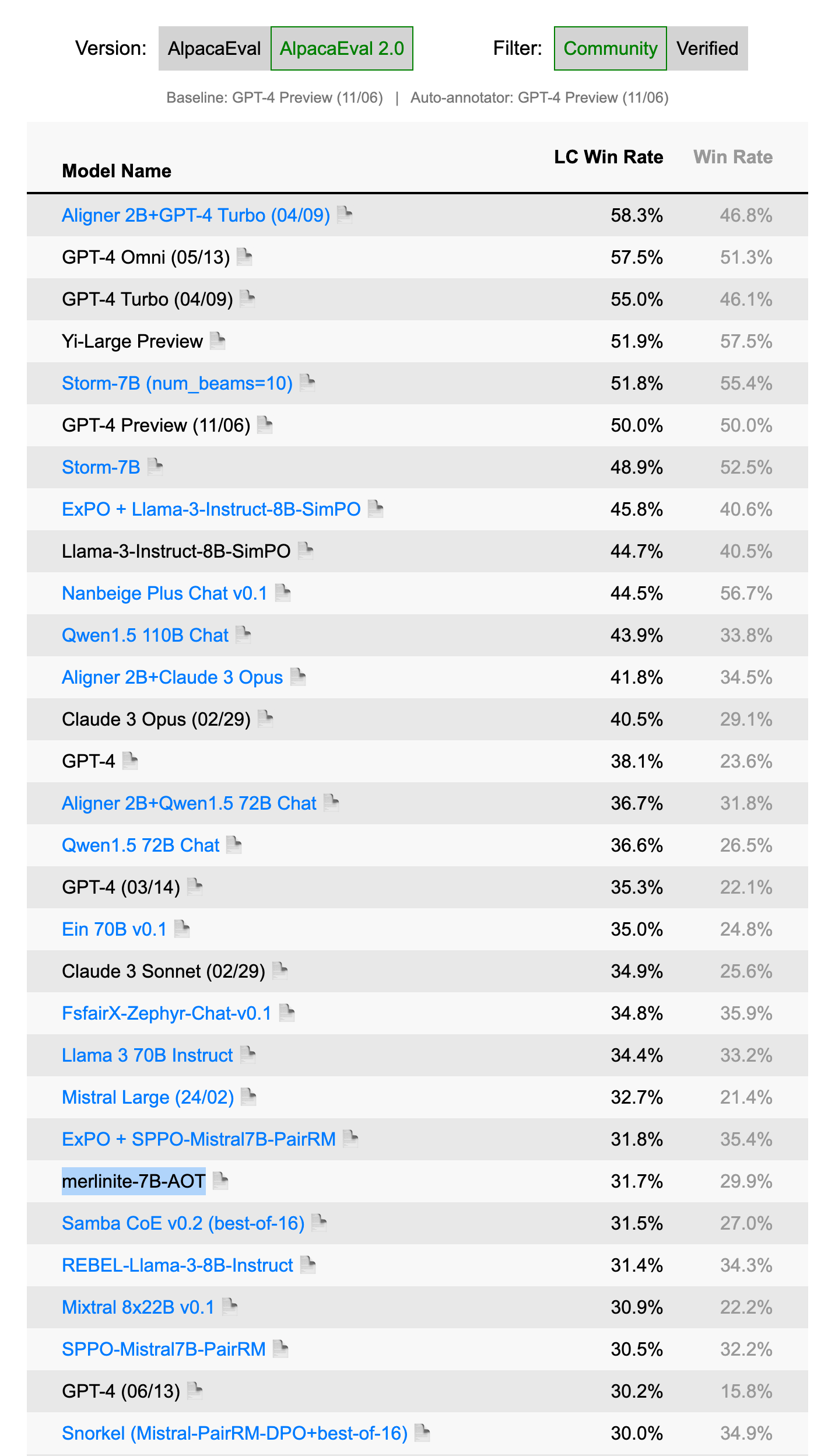}
    \caption{Our AOT algorithm gives a strong boost to Merlinite-7B model on AlpacaEval leaderboard (as of May 22nd, 2024). The original Merlinite-7B score is 17.1, and after the alignment, the model gained 83\%. }
    \label{fig:alpaca_eval_plot}
\end{figure}

\begin{table}[!h]
\centering
\resizebox{0.3\textwidth}{!}{%
\begin{tabular}{@{}lc@{}}
\toprule
           & \begin{tabular}[c]{@{}c@{}}AlpacaEval\\ (Llama3-70B)\end{tabular} \\ \midrule
AOT paired & $46.8 \pm 1.48$ \\
AOT unpaired & $48.1 \pm 0.35$  \\
DPO          & $39.2 \pm 2.35$  \\
KTO          & $33.8 \pm 1.23$ \\
IPO          & $45.7 \pm 1.56$ \\
Merlinite-7B & 28.8  \\ 
\end{tabular}%
}
\caption{Merlinite-7B trained on UltraFeedback Binarized. We evaluated the stability (variance) of the model evaluation on AlpacaEval by running 4 separate training and evaluation cycles, then computing the mean and standard deviations. The results are stable, especially for AOT unpaired, showing a low deviation from the mean.}
\label{tab:variance}
\end{table}
\end{document}